\documentclass[sigconf]{acmart}
\AtBeginDocument{%
  }

\usepackage{amsmath,amsthm}
\usepackage{algorithm} 
\usepackage{algpseudocode}
\usepackage{graphicx}
\usepackage{subcaption}
\usepackage{multirow}

\usepackage{longtable}  
\usepackage{booktabs}

\usepackage{booktabs}
\usepackage{threeparttable}
\setcopyright{acmlicensed}
\copyrightyear{2018}
\acmYear{2018}
\acmDOI{XXXXXXX.XXXXXXX}
\acmConference[Conference acronym 'XX]{Make sure to enter the correct
  conference title from your rights confirmation email}{June 03--05,
  2018}{Woodstock, NY}
\acmISBN{978-1-4503-XXXX-X/2018/06}

\begin{document}

\title{HERO: History-Enriched Rollout Training for Long-Horizon Autoregressive Neural Operators}

\settopmatter{authorsperrow=5}

\author{Jiaquan Zhang}
\affiliation{%
  \institution{School of Computer Science and Engineering, UESTC}
  \city{Chengdu}
  \country{China}
}

\author{Shuxu Chen}
\affiliation{%
  \institution{Electronics and Information Convergence Engineering, KHU}
  \city{Yongin-si}
  \country{Korea}}

\author{Haifan Meng}
\affiliation{%
  \institution{School of Computer Science and Engineering, UESTC}
  \city{Chengdu}
  \country{China}
}

\author{Yi Lu}
\affiliation{%
 \institution{Department of Mathematical Sciences, UOL}
 \city{Liverpool}
 \country{England}}

\author{Zhihan Lyu}
\affiliation{%
  \institution{School of Computer Science and Technology, XDU}
  \city{Shanxi}
  \country{China}}

\author{Fan Mo}
\affiliation{%
  \institution{School of Mechanical and Electrical Engineering, UESTC}
  \city{Chengdu}
  \country{China}}

\author{Wei Dong}
\affiliation{%
  \institution{College of Computer and Information Engineering, XAUAT}
  \city{Xi'an}
  \country{China}}

\author{Yang Yang}
\affiliation{%
  \institution{School of Computer Science and Engineering, UESTC}
  \city{Chengdu}
  \country{China}
}

\author{Chaoning~Zhang}
\affiliation{%
  \institution{School of Computer Science and Engineering, UESTC}
  \city{Chengdu}
  \country{China}
}

\renewcommand{\shortauthors}{Zhang et al.}

\begin{abstract}
Neural operators provide fast surrogates for time-dependent partial differential
equations (PDEs) by applying a learned evolution operator recursively to its own
predictions, but this autoregressive rollout feeds every prediction error back as
input, so local errors accumulate. 
Existing rollout-training strategies reduce the mismatch
between training inputs and self-generated states, yet their supervision still
measures only the absolute discrepancy from the ground-truth trajectory. 
Such supervision is therefore uninformative about whether the operator has overcome
the long-horizon failure behaviors it exhibited earlier during optimization. We
propose history-enriched rollout training (HERO), which augments conventional absolute trajectory supervision with relative supervision derived from the model’s optimization history.
HERO ranks detached candidate rollouts from a periodically
refreshed lagged operator, the current model, and a perturbed input by rollout
error, spectral discrepancy, energy drift, and error growth, and selects the
strongest failure trajectory as reference. This reference enters a margin-based
objective as a fixed comparison baseline, inducing a bounded, sample-dependent
reweighting of the ground-truth rollout gradient rather than an independent
gradient direction, which we further analyze theoretically. Experiments on nine
PDE benchmarks with spectral and attention-based backbones show that HERO
consistently improves long-horizon accuracy, stable rollout length, and
out-of-distribution robustness at no inference-time cost. These results indicate
that history-enriched relative supervision is effective for stabilizing
long-horizon autoregressive prediction.
\end{abstract}

\begin{CCSXML}
<ccs2012>
   <concept>
       <concept_id>10010147.10010257.10010293.10010294</concept_id>
       <concept_desc>Computing methodologies~Neural networks</concept_desc>
       <concept_significance>500</concept_significance>
       </concept>
 </ccs2012>
\end{CCSXML}

\ccsdesc[500]{Computing methodologies~Neural networks}



\maketitle

\section{Introduction}
\label{sec:introduction}

During autoregressive inference, each prediction is fed back as input, repeatedly propagating local errors and shifting the rollout away from the training distribution~\cite{li2023scalable,McCabeHSB23}. 
Nonlinear, multiscale, and repeated spectral dynamics can amplify these deviations, causing spectral distortion and eventual rollout instability~\cite{McCabeHSB23,lippe2023pde,li2026sgno,jiang2026hierarchical}.
Low one-step error therefore does not guarantee accurate and stable long-horizon rollouts~\cite{McCabeHSB23,lippe2023pde}.
For chaotic systems, long-term statistical fidelity must be evaluated separately from pointwise one-step accuracy~\cite{li2022learning,SchiffWPHKSZ24}.
Existing approaches mitigate this mismatch by exposing the operator to self-generated states during training, including multi-step rollout training, push-forward strategies, differentiable-solver strategies, and recurrent operator training~\cite{BrandstetterWW22,um2020solver,ye2025recurrent}. 
These rollout-training strategies reduce the mismatch between training and inference inputs by modifying the states encountered during training, while their objectives remain grounded in discrepancies between predicted and reference trajectories rather than an explicitly history-enriched failure reference~\cite{BrandstetterWW22,um2020solver,ye2025recurrent}.
Such supervision measures how far the current rollout deviates from the target evolution, but does not explicitly reveal whether its long-horizon error growth or previously observed failure patterns persist~\cite{McCabeHSB23,list2025differentiability,lippe2023pde}.

To illustrate this limitation, we compare the standard Fourier neural operator
(FNO) with its push-forward variant (FNO-PF), which propagates model-generated
states into subsequent steps and updates the operator on the resulting rollout
errors. As shown in Figure~\ref{fig1}, FNO-PF delays the onset of degradation
relative to FNO but retains a similar long-horizon growth trend. Exposure to
self-generated states therefore improves long-horizon prediction, yet absolute
supervision still cannot indicate whether the operator has overcome the failure
behaviors it produced earlier during optimization.
The model's optimization history provides a natural source of comparative information for addressing this supervision gap. 
We treat parameter states saved at earlier optimization stages as historical checkpoints; each checkpoint defines a complete autoregressive operator whose long-horizon rollout is evaluated from the same initial conditions. 
Comparing these rollouts
against the final operator (Figure~\ref{fig:motivation_history}) shows that a
checkpoint's relative quality depends on the rollout horizon and does not
improve monotonically with training: some checkpoints stay competitive over the
initial steps but fall behind at longer horizons, whereas others differ in their
transition times and error-growth patterns. Historical checkpoints are therefore
not uniformly weaker copies of the final operator; each retains a distinct
long-horizon failure behavior.
Moreover, as the current operator improves, a fixed historical trajectory
becomes too easy to outperform, reducing the contribution of the relative
objective. These observations motivate adapting the reference to both the
rollout behavior and the current optimization stage.

We propose a history-enriched rollout training framework (HERO) that supplements conventional ground-truth trajectory regression with relative long-horizon supervision. 
Instead of relying on a fixed historical checkpoint, HERO dynamically constructs up to three detached candidate
trajectories from a periodically refreshed lagged operator, the current rollout, and an optional locally perturbed input. It evaluates their rollout error, spectral discrepancy, energy deviation, and error-growth
behavior, and selects an informative failure trajectory as the comparison reference. The current operator is then optimized both to match the ground-truth evolution and to outperform the selected reference through a margin-based objective.
The selected reference is used only as a detached comparison baseline and introduces no independent gradient direction. Instead, the relative objective induces a bounded, sample-dependent reweighting of the current ground-truth rollout gradient. We characterize this mechanism through a
covariance decomposition of the aggregated update direction and establish conditional finite-step reachability of a prescribed long-horizon risk
region under explicit regularity and alignment conditions. HERO is architecture agnostic, and all reference-construction components are used only during training, leaving the inference-time architecture and cost unchanged.

The main contributions are summarized as follows:
\textbf{\textit{(i)}} We propose HERO, an architecture-agnostic history-enriched rollout training framework that converts dynamically selected long-horizon failure behaviors into relative supervision while retaining conventional ground-truth trajectory regression.
\textbf{\textit{(ii)}} We characterize the optimization mechanism induced by stop-gradient reference trajectories. The reference applies a bounded, sample-dependent reweighting to the ground-truth rollout gradients rather than introducing an independent distillation direction. We further derive a covariance decomposition of the aggregated update direction and establish conditional finite-step reachability of a prescribed long-horizon risk region.
\textbf{\textit{(iii)}} We conduct experiments on nine PDE benchmarks spanning one-, two-, and three-dimensional dynamics with spectral and attention-based backbones. The results show that HERO consistently improves long-horizon rollout accuracy, stability, and out-of-distribution robustness over strong autoregressive-training baselines without increasing inference-time cost.

\section{Related Work}
\label{sec:related}

Rollout-based training reduces the train--test mismatch of autoregressive
neural operators~\cite{kovachki2023neural,li2020fourier} by exposing the model
to self-generated states, through push-forward and temporal
bundling~\cite{BrandstetterWW22}, differentiable solver-in-the-loop
correction~\cite{um2020solver}, recursive graph
simulators~\cite{sanchez2020learning,PfaffFSB21}, unrolling
taxonomies~\cite{koehler2024apebench,list2025differentiability}, and recurrent
operators~\cite{ye2025recurrent}; yet their supervision measures only the
absolute discrepancy from the ground-truth trajectory. A complementary
direction stabilizes long horizons through spectral and structural
design~\cite{McCabeHSB23,lippe2023pde,li2026sgno,jiang2026hierarchical,HuangG25}
or long-term statistical
constraints~\cite{li2022learning,jiang2023training,SchiffWPHKSZ24}, but modifies
the operator or objective rather than supervising against the model's own
failures. HERO addresses this complementary gap, leaving the inference-time
operator unchanged and using dynamically selected historical failure
trajectories to supervise whether the current rollout improves over previously
observed failure behavior. The complete introduction of related works is in Appendix~\ref{app:rl}.

\section{Motivation}
\label{sec:motivation}
This section presents two observations motivating HERO: prediction errors accumulate substantially over long autoregressive rollouts, and historical checkpoints exhibit distinct horizon-dependent behaviors. These findings motivate complementing absolute trajectory supervision with history-enriched relative supervision.

\subsection{Long-Horizon Error Accumulation}
Figure~\ref{fig1} compares FNO, FNO-PF, and HERO over
200-step free autoregressive rollouts from the same test initial conditions. 
The three methods remain relatively close during the initial interval, but their errors separate substantially as the rollout horizon increases. FNO exhibits rapid error accumulation. FNO-PF delays part of the degradation, yet retains a similar long-range growth trend. 
In contrast, HERO maintains lower error and slower amplification throughout
the medium- and long-horizon rollout.
Figure~\ref{fig1} shows the ground-truth and predicted fields at step 100 for a representative test trajectory. The FNO and FNO-PF predictions exhibit clear phase, amplitude, and local oscillatory errors, whereas HERO more closely preserves the spatial structure of the ground-truth field.
These results show that short-horizon accuracy and exposure to self-generated states do not by themselves eliminate long-horizon degradation. Absolute supervision measures the discrepancy between the current rollout and the ground truth, but does not explicitly indicate whether the operator has overcome failure behaviors produced earlier during optimization.

\begin{figure}[t]
    \centering
    \includegraphics[width=0.485\textwidth]{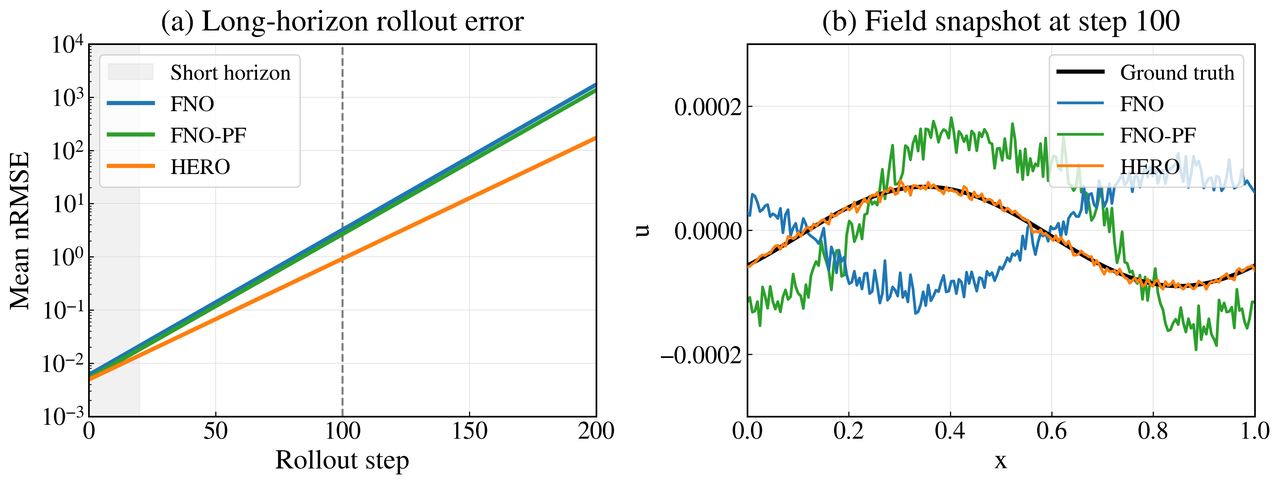}
    \caption{
    Short-horizon accuracy versus long-horizon rollout stability of FNO,
    FNO-PF, and HERO. (a) Mean normalized root-mean-square error (nRMSE) over 200-step free autoregressive rollouts from the same test initial conditions. (b) Ground-truth and predicted fields at step 100 for a representative test trajectory selected using a predefined criterion
    }
    \label{fig1}
\end{figure}

\begin{figure}[t]
    \centering
    \includegraphics[width=0.47\textwidth]{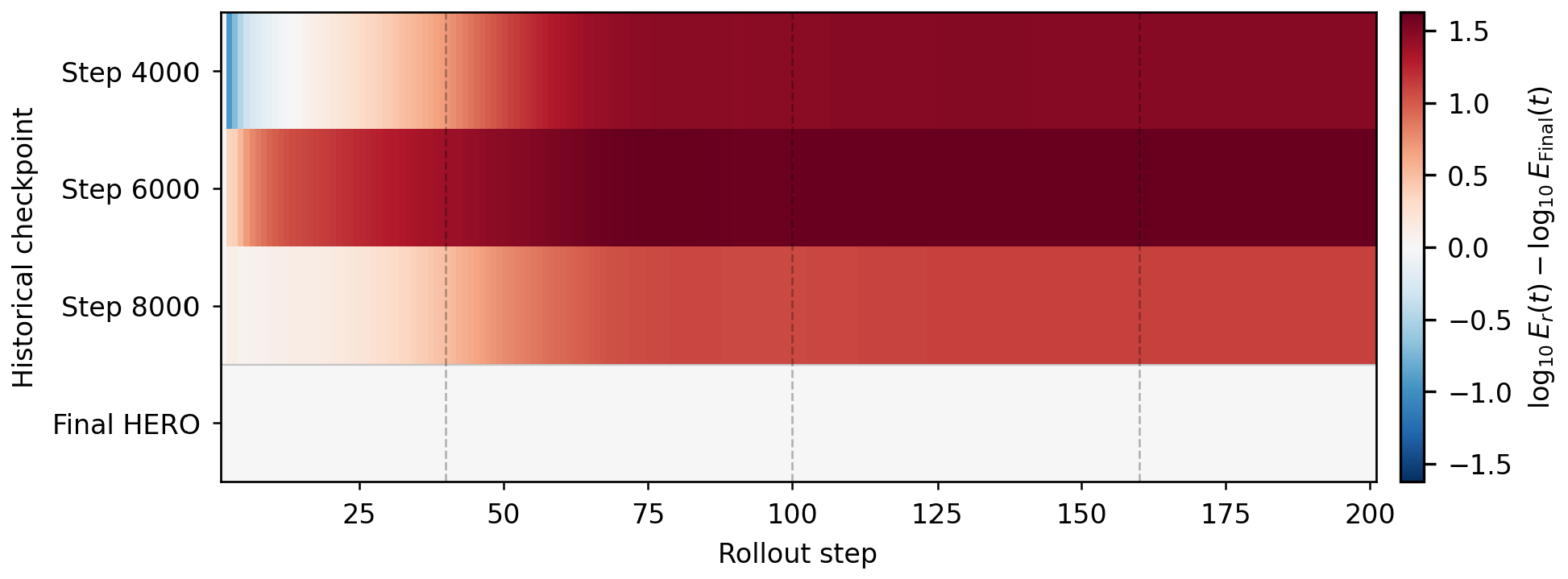}
    \caption{
    Relative rollout-error landscape across training checkpoints over 200-step free autoregressive rollouts. Each
row shows the base-10 logarithmic ratio between the rollout
error of a historical checkpoint and that of Final HERO. The
bottom row corresponds to Final HERO and is identically
zero. Dashed lines mark rollout steps 40, 100, and 160. 
    }
    \label{fig:motivation_history}
\end{figure}

\subsection{Distinct Rollout Behaviors across Training Checkpoints}

Each intermediate checkpoint defines a complete autoregressive operator
and therefore induces its own long-horizon rollout behavior.
Figure~\ref{fig:motivation_history} compares checkpoints saved at training
steps 4000, 6000, and 8000 with Final HERO. All checkpoints are evaluated
from the same test initial conditions using 200-step free autoregressive rollouts.

For checkpoint $s$, the mean relative error at rollout step $t$ is
\begin{equation}
E_s(t)
=
\frac{1}{N_{\mathrm{test}}}
\sum_{i=1}^{N_{\mathrm{test}}}
\frac{
\left\|
\hat{u}_{s,t}^{(i)}-u_t^{(i)}
\right\|_2
}{
\left\|
u_t^{(i)}
\right\|_2+\varepsilon
},
\label{eq:motivation_error}
\end{equation}
where $N_{\mathrm{test}}$ is the number of test trajectories,
$\hat{u}_{s,t}^{(i)}$ and $u_t^{(i)}$ are the predicted and ground-truth
states for trajectory $i$ at rollout step $t$, respectively, and
$\varepsilon>0$ is a numerical-stability constant. We compare each
checkpoint with Final HERO through the base-10 logarithmic ratio
\begin{equation}
\Delta_s(t)
=
\log_{10}
\left(
\frac{
E_s(t)+\varepsilon
}{
E_{\mathrm{Final}}(t)+\varepsilon
}
\right),
\label{eq:motivation_error_landscape}
\end{equation}
where $E_{\mathrm{Final}}(t)$ is the corresponding error of Final HERO.
Positive values of $\Delta_s(t)$ indicate a larger rollout error than
Final HERO, negative values indicate a smaller error, and values near zero
indicate comparable errors.
Stacking $\Delta_s(t)$ across checkpoints yields the relative error
landscape in Figure~\ref{fig:motivation_history}. The horizontal axis
represents the rollout step, while the vertical axis represents the
training checkpoint. The color scale is centered at zero, and the Final-HERO row is identically zero.
The landscape shows that the relative quality of a checkpoint depends on
the rollout horizon and does not improve monotonically throughout
training. Some historical checkpoints remain competitive during the
initial rollout steps but become substantially less accurate as the
rollout continues. Different checkpoints also exhibit distinct transition
times and error-growth patterns. Historical checkpoints are therefore not
merely uniformly weaker versions of the final model; they retain different
long-horizon error-propagation behaviors.

These observations motivate dynamic history-enriched supervision.
Historical rollouts can provide informative failure references, but a
fixed checkpoint can become stale, whereas an excessively frequent refresh
provides little contrast. HERO therefore maintains a controlled training
lag, evaluates candidate rollouts using long-horizon diagnostics, and uses
the selected trajectory to complement conventional absolute supervision.

\begin{figure*}
    \centering
    \includegraphics[width=0.87\linewidth]{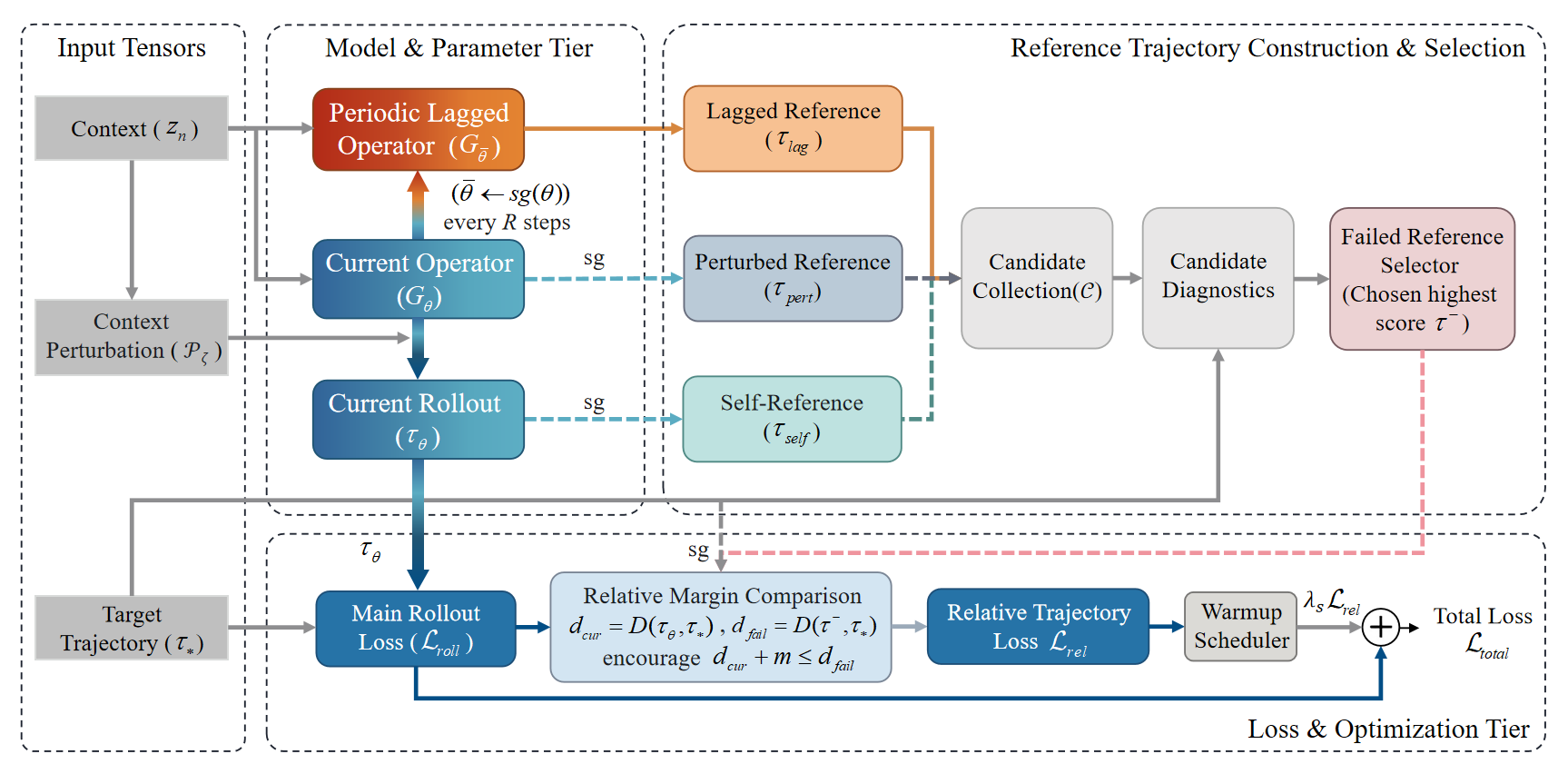}
    \caption{Overview of the HERO training framework. A lagged operator, refreshed every $R$ steps, supplies one of three detached candidate rollouts, ranked by rollout, spectral, energy, and error-growth diagnostics to select the failure reference $\tau^-$. The absolute rollout loss is combined with a margin objective driving the current trajectory error below the reference error.}
    \label{fig:main}
\end{figure*}

\section{Method}
\label{sec:method}
Figure~\ref{fig:main} gives an overview of HERO. 
From a shared input context, the current operator produces the rollout to be optimized while the lagged and perturbed branches produce detached candidates; the diagnostics then
rank these candidates into a failure reference, which meets the current rollout in a margin comparison that is added to the absolute rollout loss.

\subsection{Problem Formulation}

Consider a class of time-dependent PDEs defined on a spatial domain $\Omega\subseteq\mathbb{R}^{d}$. The state variable is denoted by $u(\mathbf{x},t)\in\mathbb{R}^{c_u}$ for
$\mathbf{x}\in\Omega$, where $c_u$ is the number of physical channels.
At the discrete time point $t_n=n\Delta t$, we denote the spatial state
by $u_n=u(\cdot,t_n)$.

Let $H$ denote the number of most recent states provided to the operator
at each prediction step. The input context at time step $n$ is defined as
$z_n=(u_{n-H+1},\ldots,u_n;\mu)$, where $\mu$ denotes conditioning
information that may influence the underlying dynamics, such as equation
parameters, external forcing, boundary conditions, or geometric
information. We consider a differentiable time-stepping operator
$G_\theta$, parameterized by trainable parameters $\theta$, that maps the
current context to a one-step prediction, namely
$G_\theta:z_n\mapsto\hat{u}_{n+1}$.
To generate a length-$K$ autoregressive rollout, the operator is
recursively applied to its own predictions. For notational consistency,
we initialize $\hat{u}_j=u_j$ for $j=n-H+1,\ldots,n$. For each rollout
step $k=1,\ldots,K$, the subsequent state is generated as
\begin{equation}
\hat{u}_{n+k}
=
G_\theta
\left(
\hat{u}_{n+k-H},
\ldots,
\hat{u}_{n+k-1};
\mu
\right).
\end{equation}
The predicted trajectory is denoted by
$\tau_\theta=(\hat{u}_{n+1},\ldots,\hat{u}_{n+K})$, and the corresponding
ground-truth trajectory is denoted by
$\tau_\star=(u_{n+1},\ldots,u_{n+K})$.

HERO is agnostic to the internal architecture of $G_\theta$. It only
requires the base operator to support differentiable one-step prediction
and recursive rollout. HERO therefore modifies the training procedure
without changing the inference-time architecture.

\subsection{Absolute Rollout Supervision}

Given the predicted trajectory $\tau_\theta$ and the corresponding
ground-truth trajectory $\tau_\star$, we first define a trajectory-level
discrepancy. This discrepancy serves as both the conventional absolute
supervision and a common measure for evaluating the current and candidate
trajectories against the same ground-truth trajectory.

Let $\mathcal{X}$ denote the discrete state space induced by the chosen
spatial discretization. For two states $v,w\in\mathcal{X}$, we define the
normalized state error as
\begin{equation}
d_{\mathcal{X}}(v,w)
=
\frac{
\|v-w\|_{\mathcal{X}}
}{
\|w\|_{\mathcal{X}}+\varepsilon
},
\end{equation}
where $\varepsilon>0$ is a constant for numerical stability. The spatial
norm is defined as
\begin{equation}
\|v\|_{\mathcal{X}}^2
=
\sum_{i=1}^{N_x}
\omega_i
\|v(\mathbf{x}_i)\|_2^2.
\end{equation}
Here, $N_x$ is the number of spatial sampling points $\mathbf{x}_i$,
$\|\cdot\|_2$ denotes the Euclidean norm over the $c_u$ physical
channels, and $\omega_i\geq0$ is the discrete integration weight
associated with the $i$-th spatial point. On a regular grid, the weights
are uniform. On irregular grids, finite-element meshes, or point clouds,
they may represent cell volumes, nodal areas, or quadrature weights.
For any length-$K$ trajectory $\tau=(v_1,\ldots,v_K)$, we define its
distance from the ground-truth trajectory as
\begin{equation}
D(\tau,\tau_\star)
=
\frac{1}{K}
\sum_{k=1}^{K}
d_{\mathcal{X}}
\left(
v_k,
u_{n+k}
\right).
\end{equation}
Applying this distance to the current rollout gives the absolute rollout
objective
\begin{equation}
\mathcal{L}_{\mathrm{roll}}
=
\mathbb{E}_{(z_n,\tau_\star)}
\left[
D(\tau_\theta,\tau_\star)
\right].
\label{eq:roll}
\end{equation}
The expectation is taken over sampled input contexts $z_n$ and their
corresponding ground-truth trajectories $\tau_\star$.

The objective $\mathcal{L}_{\mathrm{roll}}$ anchors the current rollout
to the reference PDE evolution. However, it evaluates the current
trajectory only through its absolute discrepancy from $\tau_\star$ and
does not explicitly measure improvement over structured long-horizon
behaviors produced during optimization. HERO therefore supplements the
absolute objective with history-enriched relative supervision.

\subsection{History-Enriched Relative Supervision}
\label{subsec:relative}

To provide the missing comparative signal, HERO compares the ground-truth
error of the current rollout with that of a dynamically selected,
model-generated failure trajectory.

Let $s$ denote the optimization iteration and let $\theta_s$ denote the
current parameters before the parameter update at iteration $s$. Relative
reference construction starts at iteration $s_0$. For a sampled context
$z_n$, the iteration-indexed current rollout is defined as
\begin{equation}
\tau_{\theta_s}
=
\operatorname{Rollout}
\left(
G_{\theta_s},
z_n,
K
\right).
\label{eq:current_rollout}
\end{equation}
Here, $\operatorname{Rollout}(G,z,K)$ denotes the length-$K$
autoregressive rollout of operator $G$ initialized from context $z$.
Thus, $\tau_{\theta_s}$ is the optimization-iteration-specific form of
the trajectory $\tau_\theta$ defined above. The dependence of subsequent
trajectory notation on $s$ and $n$ is omitted when no ambiguity arises.

HERO constructs the relative supervision in five stages:
1) updating a lagged operator when the refresh condition is satisfied,
2) generating candidate rollouts,
3) computing trajectory diagnostics,
4) scoring and selecting a failure trajectory, and
5) constructing the relative objective.

The lagged operator is updated periodically, whereas the remaining stages
are performed at each relative-training iteration. These stages correspond
to the main operations in Algorithm~\ref{alg:hero}.

\textit{1) Lagged Operator:}
HERO maintains the current operator $G_{\theta_s}$ and a lagged operator
$G_{\bar{\theta}_s}$. The current operator is updated by gradient descent,
whereas the lagged operator is periodically copied from the current
operator and remains frozen between refreshes.

Let $R$ be a positive integer denoting the refresh interval. For
$s\geq s_0$, the lagged parameters are updated as
\begin{equation}
\bar{\theta}_s
=
\begin{cases}
\operatorname{sg}(\theta_s),
& s=s_0, \\[2pt]
\operatorname{sg}(\theta_s),
& s>s_0 \ \text{and}\ (s-s_0)\bmod R=0, \\[2pt]
\bar{\theta}_{s-1},
& \text{otherwise},
\end{cases}
\label{eq:lag_update}
\end{equation}
where $\operatorname{sg}(\cdot)$ denotes the stop-gradient operator.
After each refresh, $G_{\bar{\theta}_s}$ remains frozen until the next
refresh and receives no backpropagated gradient.

Given the same input context $z_n$, the lagged operator generates
\begin{equation}
\tau_{\mathrm{lag}}
=
\operatorname{Rollout}
\left(
G_{\bar{\theta}_s},
z_n,
K
\right).
\label{eq:lag_rollout}
\end{equation}
The lagged rollout is used only as a comparison reference. HERO does not
train the current operator to reproduce or imitate
$\tau_{\mathrm{lag}}$. Updating the lagged operator at every iteration
would make the current and lagged rollouts nearly identical, whereas
keeping it fixed throughout training would produce an increasingly stale
reference. Periodic refresh maintains a controlled optimization delay
between the two operators.

\textit{2) Candidate Rollouts:}
HERO constructs up to three candidate rollouts: a detached version of the
current rollout, a rollout from the lagged operator, and a rollout
generated by the current operator from a locally perturbed input context.

(i) The self-reference is obtained by detaching the current rollout:
\begin{equation}
\tau_{\mathrm{self}}
=
\operatorname{sg}
\left(
\tau_{\theta_s}
\right).
\label{eq:self_rollout}
\end{equation}
For a deterministic base operator, $\tau_{\mathrm{self}}$ and
$\tau_{\theta_s}$ are numerically identical, but gradients propagate only
through $\tau_{\theta_s}$. The self-reference therefore provides a fixed
representation of the current rollout for comparison.
(ii) The lagged candidate is the rollout $\tau_{\mathrm{lag}}$ generated by
the frozen lagged operator. Since the lagged operator receives no
backpropagated gradient, $\tau_{\mathrm{lag}}$ provides a detached
reference from an earlier optimization state.
(iii) The perturbed candidate evaluates the current operator near the original
input context. Let $\zeta\sim p_\zeta$ denote a random perturbation
variable. The operator $\mathcal{P}_\zeta$ perturbs only the state-history
component of $z_n$ and leaves the conditioning information $\mu$
unchanged. The perturbed rollout is
\begin{equation}
\tau_{\mathrm{pert}}
=
\operatorname{sg}
\left[
\operatorname{Rollout}
\left(
G_{\theta_s},
\mathcal{P}_\zeta(z_n),
K
\right)
\right].
\label{eq:pert_rollout}
\end{equation}
The perturbation preserves the basic scale and boundary structure of the
input so that $\mathcal{P}_\zeta(z_n)$ remains in a local neighborhood of
the original context. On regular grids, $\mathcal{P}_\zeta$ may be
implemented as a weak spectral perturbation. On irregular grids or point
clouds, it may use local smoothing, neighborhood jitter, or a
graph-spectral perturbation.

When all three branches are enabled, the active candidate set is
\begin{equation}
\mathcal{C}
=
\left\{
\tau_{\mathrm{lag}},
\tau_{\mathrm{pert}},
\tau_{\mathrm{self}}
\right\}.
\label{eq:candidate_set}
\end{equation}
If no suitable perturbation is available, $\tau_{\mathrm{pert}}$ is
removed from $\mathcal{C}$. The number of active candidates is
$M=|\mathcal{C}|$, with $M\geq2$ because the lagged and self-reference
branches remain active.
For subsequent evaluation, we fix a deterministic enumeration of the
active candidates and denote them by
$\tau^{(1)},\ldots,\tau^{(M)}$. The enumeration follows the priority
$\tau_{\mathrm{lag}}\succ\tau_{\mathrm{pert}}\succ\tau_{\mathrm{self}}$,
with unavailable branches omitted. All active candidates are detached
from the computation graph and are used only for reference selection.

\textit{3) Trajectory Diagnostics:}
This stage evaluates the long-horizon behavior of each candidate and
determines which trajectory provides the strongest measured failure
reference. For each candidate $j=1,\ldots,M$, its length-$K$
trajectory is $\tau^{(j)}=(v_1^{(j)},\ldots,v_K^{(j)})$.

HERO computes scalar diagnostics that measure the overall rollout error,
spectral-amplitude mismatch, quadratic-energy drift, and temporal error
growth. These quantities are used only to rank the candidates. The same
numerical-stability constant $\varepsilon>0$ is used throughout.

The overall rollout error is
\begin{equation}
E_{\mathrm{roll}}^{(j)}
=
D
\left(
\tau^{(j)},
\tau_\star
\right).
\label{eq:diag_roll}
\end{equation}
This diagnostic measures the average state-space discrepancy between the
candidate trajectory and the ground-truth trajectory over the rollout
horizon.
When a spatial spectral transform is available, the spectral-amplitude
discrepancy is
\begin{equation}
E_{\mathrm{spec}}^{(j)}
=
\frac{1}{K}
\sum_{k=1}^{K}
\frac{
\left\|
\left|\mathcal{F}[v_k^{(j)}]\right|
-
\left|\mathcal{F}[u_{n+k}]\right|
\right\|_1
}{
\left\|
\left|\mathcal{F}[u_{n+k}]\right|
\right\|_1
+
\varepsilon
}.
\label{eq:diag_spec}
\end{equation}
Here, $\mathcal{F}$ denotes the spatial spectral transform applied
independently to each physical channel, $|\cdot|$ denotes the elementwise
magnitude, and $\|\cdot\|_1$ is taken over all spectral coefficients and
physical channels. This diagnostic measures whether the candidate assigns
incorrect amplitudes to the spatial modes. If no suitable spectral
transform is available, this diagnostic is omitted or replaced by one
defined in an appropriate spectral basis.

To measure global amplitude drift, HERO uses the quadratic state-energy proxy
\begin{equation}
\mathcal{E}(u)
=
\sum_{i=1}^{N_x}
\omega_i
\left\|
u(\mathbf{x}_i)
\right\|_2^2.
\label{eq:energy_proxy}
\end{equation}
This quantity equals the squared spatial norm defined above and is used as
an energy proxy rather than a PDE-specific conserved energy. The
corresponding energy drift is
\begin{equation}
E_{\mathrm{energy}}^{(j)}
=
\frac{1}{K}
\sum_{k=1}^{K}
\frac{
\left|
\mathcal{E}(v_k^{(j)})
-
\mathcal{E}(u_{n+k})
\right|
}{
\mathcal{E}(u_{n+k})
+
\varepsilon
}.
\label{eq:diag_energy}
\end{equation}
This diagnostic measures global amplitude loss, excessive dissipation,
and abnormal energy growth.
Let $e_k^{(j)}=d_{\mathcal{X}}(v_k^{(j)},u_{n+k})$
denote the state error of candidate $j$ at rollout step $k$. 
For $K\geq2$, the error-growth diagnostic is
\begin{equation}
E_{\mathrm{growth}}^{(j)}
=
\frac{1}{K-1}
\sum_{k=1}^{K-1}
\max
\left(
0,
e_{k+1}^{(j)}-e_k^{(j)}
\right).
\label{eq:diag_growth}
\end{equation}
This diagnostic accumulates only positive error increments and therefore
measures how strongly the prediction error grows along the rollout.
All candidate trajectories are detached before these diagnostics are
computed. Consequently, the diagnostic values are used only for candidate
ranking and reference selection, and no gradient is propagated through
them.

\textit{4) Candidate Scoring and Selection:}
HERO uses the trajectory diagnostics to score the active candidates and
select the candidate that exhibits the strongest measured long-horizon
failure.
Let $\mathcal{D}$ denote the set of active diagnostics. When all four
diagnostics are available, it is defined as
\begin{equation}
\mathcal{D}
=
\left\{
\mathrm{roll},
\mathrm{spec},
\mathrm{energy},
\mathrm{growth}
\right\}.
\end{equation}
Diagnostics that are not applicable to the current dataset are removed
from $\mathcal{D}$.

Because the diagnostics have different numerical scales, HERO normalizes
each diagnostic across the $M$ active candidates within the same training
sample. For each $q\in\mathcal{D}$ and $j=1,\ldots,M$, the normalized
diagnostic is
\begin{equation}
\widetilde{E}_q^{(j)}
=
\frac{
E_q^{(j)}
-
\min_{1\leq\ell\leq M}
E_q^{(\ell)}
}{
\max_{1\leq\ell\leq M}
E_q^{(\ell)}
-
\min_{1\leq\ell\leq M}
E_q^{(\ell)}
+
\varepsilon
}.
\label{eq:diag_normalization}
\end{equation}
This normalization maps each diagnostic to $[0,1]$ across the active
candidates and avoids introducing separate diagnostic-weighting
hyperparameters.
%
HERO computes the score of candidate $j$ as the average normalized diagnostic,
$S^{(j)}=\frac{1}{|\mathcal{D}|}\sum_{q\in\mathcal{D}}
\widetilde{E}_q^{(j)}$
\refstepcounter{equation}\label{eq:candidate_score}
\textup{(\theequation)}.
A larger score indicates stronger failure behavior across the active diagnostics.
HERO selects the highest-scoring candidate, indexed by
$j^\star=\arg\max_{1\leq j\leq M}S^{(j)}$. If multiple candidates attain
the same maximum, HERO selects the first one according to the fixed
candidate order defined above. The selected candidate is used as the
failure reference:
\begin{equation}
\tau^{-}
=
\tau^{(j^\star)}.
\end{equation}
The selected $\tau^{-}$ is therefore a model-generated trajectory that
exhibits the strongest measured failure behavior among the active
candidates. The diagnostics determine which candidate is selected,
whereas the relative objective compares the selected trajectory with the
current rollout using the common trajectory distance $D$. The diagnostic
values and candidate scores are not backpropagated.

\textit{5) Relative Objective:}
HERO converts the selected failure trajectory $\tau^{-}$ into a relative
improvement signal. The current trajectory error is
$d_{\mathrm{cur}}=D(\tau_{\theta_s},\tau_\star)$, while the selected
reference error is
$d_{\mathrm{fail}}=\operatorname{sg}[D(\tau^{-},\tau_\star)]$. The
stop-gradient operation treats the selected trajectory as a fixed
comparison reference, so gradients propagate only through the current
rollout.

HERO encourages the current rollout to outperform the selected reference
by a margin $m>0$, corresponding to
$d_{\mathrm{cur}}+m\leq d_{\mathrm{fail}}$. We implement this requirement
using the smooth margin loss
\begin{equation}
\ell_{\mathrm{rel}}
=
\frac{1}{\beta}
\log
\left(
1+
\exp
\left(
\beta
\left[
d_{\mathrm{cur}}
-
d_{\mathrm{fail}}
+
m
\right]
\right)
\right).
\label{eq:relative_loss}
\end{equation}
where $\beta>0$ controls the sharpness of the transition around the
margin. The loss assigns a larger penalty when the current rollout has
not outperformed the reference by the required margin and decreases as
the margin is satisfied.
Averaging the sample-level loss over the training distribution gives the
relative supervision objective
\begin{equation}
\mathcal{L}_{\mathrm{rel}}
=
\mathbb{E}_{(z_n,\tau_\star)}
\left[
\ell_{\mathrm{rel}}
\right].
\label{eq:rel}
\end{equation}
When the perturbed branch is enabled, the expectation also includes the
randomness of $\zeta$.

\subsection{Training Objective}
\label{subsec:total}

HERO introduces relative supervision only after the operator has learned
a basic time-stepping model. Reference construction is enabled at
iteration $s_0$, while the relative-supervision weight increases from
zero over a warmup period.
Let $W\in\mathbb{N}_{+}$ denote the warmup length and let
$\lambda_{\max}\geq0$ denote the maximum relative-supervision weight.
At optimization iteration $s$, the weight is
\begin{equation}
\lambda_s
=
\lambda_{\max}
\operatorname{clip}
\left(
\frac{s-s_0}{W},
0,
1
\right),
\end{equation}
where
$\operatorname{clip}(x,a,b)=\min(\max(x,a),b)$.
Thus, $\lambda_s=0$ for $s\leq s_0$, increases from zero to
$\lambda_{\max}$ over the following $W$ iterations, and remains equal to
$\lambda_{\max}$ afterward.
The complete training objective is
\begin{equation}
\mathcal{L}_{\mathrm{total}}
=
\mathcal{L}_{\mathrm{roll}}
+
\lambda_s
\mathcal{L}_{\mathrm{rel}}.
\label{eq:total}
\end{equation}
The absolute term learns the reference PDE evolution, while the relative
term encourages the current rollout to improve upon the selected failure
trajectory.

\section{Experiments}
\subsection{Experimental Setup}
\label{subsec:setup}

\textbf{Backbones and baselines.}
We build every method on two backbones: \emph{FNO}, a spectral neural operator,
and \emph{Transolver}, an attention-based neural operator with Physics-Attention~\cite{wu2024Transolver}.
On each backbone we compare one-step supervised training, push-forward training
(PF), PDE-Refiner (Refine), recurrent operator training (RNO), and HERO, denoting variants by \emph{Backbone}-\emph{Strategy} (e.g., FNO-PF)~\cite{BrandstetterWW22,lippe2023pde,ye2025recurrent}. 
PF and RNO are closest to HERO, since all three train on self-generated states yet supervise only the absolute discrepancy from the ground truth, whereas Refine corrects the output itself; HERO keeps ground-truth rollout regression and adds relative supervision against a dynamically history-enriched failure reference.
Full baseline descriptions are given in Appendix~\ref{app:baselines}.

\textbf{Benchmarks.}
We evaluate on nine PDE benchmarks spanning 1D, 2D, and 3D dynamics
(Table~\ref{tab:benchmarks_appendix}); APEBench supplies the task definitions, while PDEBench provides complementary scientific-machine-learning benchmark coverage~\cite{koehler2024apebench,takamoto2024pdebenchextensivebenchmarkscientific}: 
Dispersion~\cite{Korteweg01051895}, Burgers~\cite{BURGERS1948171}, KdV~\cite{Korteweg01051895}, and Kuramoto--Sivashinsky (KS)~\cite{SIVASHINSKY19771177,10.1143/PTPS.64.346} in 1D; Anisotropic Diffusion~\cite{fick1855ueber}, Kolmogorov Flow~\cite{MESHALKIN19611700}, and Decaying Turbulence (Navier--Stokes)~\cite{navier1823memoire} in 2D; and Swift Hohenberg~\cite{PhysRevA.15.319} and Unbalanced Advection in 3D~\cite{Courant1928Ber}.
Together they cover dispersion, nonlinear transport, dissipative chaos, forced and decaying turbulence, pattern formation, and multi-directional advection, stressing phase, spectral, energy, and stability behavior over long rollouts. 
All benchmarks use periodic boundaries and a single scalar channel, with $50$ training and $30$
test trajectories and $200$-step free autoregressive rollouts. Governing equations and coefficients are given in Appendix~\ref{app:benchmarks}.

\textbf{Metrics.}
For evaluation trajectory $j$ and rollout step $t$, the per-step normalized RMSE is
$\mathrm{nRMSE}_t^{(j)}=\|\hat u_t^{(j)}-u_t^{(j)}\|_2/(\|u_t^{(j)}\|_2+\varepsilon)$,
with $\varepsilon>0$ a numerical-stability constant. We report the set-averaged
$\mathrm{nRMSE@}1$ and $\mathrm{nRMSE@}100$ (one-step and $100$-step accuracy;
lower is better), the geometric-mean rollout error (GM$_{100}$)
\begin{equation}
  \mathrm{GM100}^{(j)}
    = \exp\!\Big(\tfrac{1}{100}\textstyle\sum_{t=1}^{100}
      \log\big(\mathrm{nRMSE}_t^{(j)}+\varepsilon\big)\Big),
  \label{eq:GM100}
\end{equation}
reported as its validation- and test-set means (Val/Test mean GM100; lower is
better), and the mean stable step
$\tfrac{1}{M}\sum_{j}S^{(j)}$ with
$S^{(j)}=\max\{T\in[1,100]\mid \mathrm{nRMSE}_t^{(j)}\le\tau\ \ \forall\,t\le T\}$
and $\tau=0.1$ (higher is better). Val mean GM100 is used for checkpoint selection. The detailed metrics descriptions are given in Appendix~\ref{app:metrics}

\textbf{Implementation.}
Each task provides 50 training trajectories of length 51 and 30 test trajectories of length 201; at test time every model rolls out $200$
steps from the first frame. All methods share the backbone, data split, optimizer,
and evaluation protocol, and use a $K_{\mathrm{pf}}=5$ push-forward base; HERO adds
the relative objective $\mathcal{L}_{\mathrm{rel}}$, giving
$\mathcal{L}_{\mathrm{total}}=\mathcal{L}_{\mathrm{roll}}+\lambda_s\mathcal{L}_{\mathrm{rel}}$.
Across all tasks HERO uses $K=5$, $s_0=4000$, $R=2000$, $W=2000$,
$\lambda_{\max}=0.02$, $m=0.02$, $\beta=5.0$, and candidate set
$\mathcal{C}=\{\tau_{\mathrm{lag}},\tau_{\mathrm{pert}},\tau_{\mathrm{self}}\}$;
reference construction is confined to training and adds no inference-time cost. We
optimize with AdamW (learning rate $10^{-3}$, warm-up--cosine schedule, $2000$
warm-up steps, $10\,000$ total steps, batch size $20$), and report test results at
the checkpoint with the best Val mean GM100. Backbone architectures per
dimension and further details are given in Appendix~\ref{app:impl}.
We fix the start step $s_0=4000$ and the refresh interval $R=2000$ of
\eqref{eq:lag_update} for all tasks: this pair introduces relative supervision
once the backbone has acquired a stable one-step evolution while leaving $6000$
steps of history-enriched reweighting, and it refreshes the lagged operator
$\bar{\theta}_s$ often enough to avoid a stale reference without collapsing it
onto the current operator $\theta_s$. A sensitivity analysis of both
hyperparameters is given in Appendix~\ref{subsec:hyper}.

\subsection{Main Results}
Table~\ref{tab:main} reports the mean performance on six PDE benchmarks
spanning one-, two-, and three-dimensional systems. The complete results
on all nine benchmarks, including the remaining three benchmarks and the
corresponding standard deviations over five seeds, are provided in
Tables~\ref{tab:full1d} and~\ref{tab:full23d} in Appendix~\ref{app:addexp}.
Across both the main and additional benchmarks, HERO consistently improves
FNO and Transolver. On the six main benchmarks, HERO achieves the best
result within each backbone family for all reported metrics.
On FNO, HERO reduces nRMSE@100 from $0.541$ to $0.153$ on
Burgers, from $1.131$ to $0.640$ on KS, from $1.868$ to $0.521$
on anisotropic diffusion, and from $13.629$ to $4.101$ on NS.
Compared with the strongest non-HERO strategy, it reduces
nRMSE@100 by $33.7\%$--$35.8\%$ across the six benchmarks.
Similarly, on Transolver, HERO reduces nRMSE@100 by
$33.6\%$--$36.7\%$ relative to the strongest competing strategy.
The improvements in nRMSE@100 and Stable step are substantially
larger than those in nRMSE@1. For example, HERO increases the
stable rollout length from $35.6$ to $81.7$ steps on Burgers and
from $8.2$ to $26.6$ steps on anisotropic diffusion. These results
indicate that HERO primarily suppresses long-horizon error
accumulation rather than merely improving one-step fitting, while
its consistent gains on both backbones demonstrate architectural
generality.
Because reference construction is confined to training, HERO leaves the
deployed operator identical to its backbone; a measured comparison of
parameter count, inference latency, memory, and training throughput is
reported in Appendix~\ref{subsec:cost}.

\begin{table}[t]
  \centering
  \caption{
    Main results on six PDE benchmarks.
  }
  \label{tab:main}
  \scriptsize
  \renewcommand{\arraystretch}{0.9}

  \begin{tabular*}{\columnwidth}{
    @{\extracolsep{\fill}}llllll@{}
  }
    \toprule
    Method
    & \shortstack{Val\\GM$_{100}$ $\downarrow$}
    & \shortstack{Test\\GM$_{100}$ $\downarrow$}
    & \shortstack{nRMSE\\@1 $\downarrow$}
    & \shortstack{nRMSE\\@100 $\downarrow$}
    & \shortstack{Stable\\step $\uparrow$} \\
    \midrule

    \multicolumn{6}{c}{\textit{Burgers (1D)}} \\
    \midrule
    FNO
    & 0.473 & 0.526 & 0.0429 & 0.541 & 35.6 \\
    FNO-PF
    & 0.218 & 0.208 & 0.0180 & 0.236 & 68.0 \\
    FNO-Ref.
    & 0.407 & 0.457 & 0.0368 & 0.495 & 40.4 \\
    FNO-RNO
    & 0.293 & 0.311 & 0.0249 & 0.335 & 55.0 \\
    FNO-HERO
    & \textbf{0.147}
    & \textbf{0.138}
    & \textbf{0.0168}
    & \textbf{0.153}
    & \textbf{81.7} \\
    \addlinespace[1pt]
    Trans.
    & 0.487 & 0.537 & 0.0478 & 0.618 & 31.2 \\
    Trans.-PF
    & 0.449 & 0.505 & 0.0422 & 0.562 & 35.5 \\
    Trans.-Ref.
    & 0.428 & 0.483 & 0.0401 & 0.535 & 38.7 \\
    Trans.-RNO
    & 0.384 & 0.434 & 0.0362 & 0.494 & 41.4 \\
    Trans.-HERO
    & \textbf{0.250}
    & \textbf{0.281}
    & \textbf{0.0341}
    & \textbf{0.327}
    & \textbf{49.3} \\

    \midrule
    \multicolumn{6}{c}{\textit{Kuramoto--Sivashinsky (1D)}} \\
    \midrule
    FNO
    & 0.511 & 0.578 & 0.0190 & 1.131 & 43.8 \\
    FNO-PF
    & 0.401 & 0.451 & 0.0179 & 0.972 & 44.7 \\
    FNO-Ref.
    & 0.469 & 0.528 & 0.0159 & 1.064 & 46.4 \\
    FNO-RNO
    & 0.458 & 0.515 & 0.0181 & 1.009 & 43.6 \\
    FNO-HERO
    & \textbf{0.257}
    & \textbf{0.283}
    & \textbf{0.0150}
    & \textbf{0.640}
    & \textbf{53.3} \\
    \addlinespace[1pt]
    Trans.
    & 0.611 & 0.695 & 0.0250 & 1.284 & 37.6 \\
    Trans.-PF
    & 0.501 & 0.571 & 0.0211 & 1.163 & 41.8 \\
    Trans.-Ref.
    & 0.498 & 0.572 & 0.0190 & 1.238 & 42.4 \\
    Trans.-RNO
    & 0.477 & 0.538 & 0.0201 & 1.099 & 42.6 \\
    Trans.-HERO
    & \textbf{0.321}
    & \textbf{0.361}
    & \textbf{0.0181}
    & \textbf{0.715}
    & \textbf{49.2} \\

    \midrule
    \multicolumn{6}{c}{\textit{Anisotropic Diffusion (2D)}} \\
    \midrule
    FNO
    & 0.713 & 0.825 & 0.0875 & 1.868 & 8.2 \\
    FNO-PF
    & 0.655 & 0.754 & 0.0716 & 1.355 & 12.9 \\
    FNO-Ref.
    & 0.567 & 0.652 & 0.0611 & 0.786 & 20.7 \\
    FNO-RNO
    & 0.692 & 0.778 & 0.0846 & 0.925 & 9.7 \\
    FNO-HERO
    & \textbf{0.363}
    & \textbf{0.415}
    & \textbf{0.0571}
    & \textbf{0.521}
    & \textbf{26.6} \\
    \addlinespace[1pt]
    Trans.
    & 1.168 & 1.358 & 0.0799 & 2.968 & 8.1 \\
    Trans.-PF
    & 0.965 & 1.111 & 0.0696 & 2.172 & 13.0 \\
    Trans.-Ref.
    & 0.803 & 0.943 & 0.0692 & 1.835 & 14.7 \\
    Trans.-RNO
    & 0.892 & 1.048 & 0.0751 & 2.099 & 10.4 \\
    Trans.-HERO
    & \textbf{0.537}
    & \textbf{0.626}
    & \textbf{0.0652}
    & \textbf{1.185}
    & \textbf{18.2} \\

    \midrule
    \multicolumn{6}{c}{\textit{Navier--Stokes (2D)}} \\
    \midrule
    FNO
    & 2.322 & 2.915 & 0.0449 & 13.629 & 17.6 \\
    FNO-PF
    & 1.441 & 1.682 & 0.0349 & 6.271 & 22.3 \\
    FNO-Ref.
    & 1.959 & 2.330 & 0.0402 & 9.888 & 18.1 \\
    FNO-RNO
    & 1.788 & 2.141 & 0.0388 & 7.829 & 20.7 \\
    FNO-HERO
    & \textbf{0.931}
    & \textbf{1.058}
    & 0.0325
    & \textbf{4.101}
    & \textbf{26.5} \\
    \addlinespace[1pt]
    Trans.
    & 3.075 & 3.825 & 0.0301 & 18.084 & 20.3 \\
    Trans.-PF
    & 2.400 & 2.974 & 0.0278 & 12.438 & 24.3 \\
    Trans.-Ref.
    & 2.169 & 2.591 & 0.0280 & 10.974 & 23.1 \\
    Trans.-RNO
    & 2.224 & 2.704 & 0.0291 & 11.438 & 22.5 \\
    Trans.-HERO
    & \textbf{1.423}
    & \textbf{1.687}
    & \textbf{0.0262}
    & \textbf{7.012}
    & \textbf{25.8} \\

    \midrule
    \multicolumn{6}{c}{\textit{Swift--Hohenberg (3D)}} \\
    \midrule
    FNO
    & 0.603 & 0.678 & 0.0788 & 0.941 & 13.4 \\
    FNO-PF
    & 0.514 & 0.581 & 0.0607 & 0.843 & 20.4 \\
    FNO-Ref.
    & 0.552 & 0.619 & 0.0642 & 0.871 & 19.6 \\
    FNO-RNO
    & 0.520 & 0.596 & 0.0662 & 0.850 & 19.5 \\
    FNO-HERO
    & \textbf{0.341}
    & \textbf{0.381}
    & \textbf{0.0570}
    & \textbf{0.541}
    & \textbf{27.7} \\
    \addlinespace[1pt]
    Trans.
    & 0.821 & 0.912 & 0.1079 & 1.286 & 5.3 \\
    Trans.-PF
    & 0.686 & 0.782 & 0.0914 & 1.126 & 5.9 \\
    Trans.-Ref.
    & 0.633 & 0.713 & 0.0823 & 1.004 & 11.3 \\
    Trans.-RNO
    & 0.652 & 0.739 & 0.0860 & 1.064 & 7.6 \\
    Trans.-HERO
    & \textbf{0.404}
    & \textbf{0.450}
    & \textbf{0.0780}
    & \textbf{0.667}
    & \textbf{15.0} \\

    \midrule
    \multicolumn{6}{c}{\textit{Unbalanced Advection (3D)}} \\
    \midrule
    FNO
    & 2.311 & 2.647 & 0.0869 & 7.204 & 6.4 \\
    FNO-PF
    & 1.833 & 2.121 & 0.0748 & 5.546 & 10.0 \\
    FNO-Ref.
    & 2.017 & 2.317 & 0.0776 & 6.086 & 7.8 \\
    FNO-RNO
    & 1.697 & 1.931 & 0.0791 & 4.833 & 9.0 \\
    FNO-HERO
    & \textbf{1.101}
    & \textbf{1.244}
    & 0.0702
    & 3.185
    & \textbf{12.7} \\
    \addlinespace[1pt]
    Trans.
    & 2.723 & 3.106 & 0.0731 & 7.777 & 9.8 \\
    Trans.-PF
    & 2.096 & 2.426 & 0.0736 & 6.122 & 10.3 \\
    Trans.-Ref.
    & 1.901 & 2.201 & 0.0713 & 5.319 & 9.6 \\
    Trans.-RNO
    & 1.784 & 2.079 & 0.0742 & 5.015 & 10.6 \\
    Trans.-HERO
    & \textbf{1.197}
    & \textbf{1.389}
    & \textbf{0.0673}
    & \textbf{3.175}
    & \textbf{12.1} \\
    \bottomrule
  \end{tabular*}
  \parbox{\columnwidth}{%
    \small
    Stable step denotes the mean number of stable rollout steps.
  }
\end{table}

\subsection{Ablation Study}
\label{subsec:ablation}

To examine which part of the history-enriched design produces the long-horizon gain, we conduct component-level ablations on the 2D Navier--Stokes benchmark.
All variants share the FNO backbone, data split, optimizer, evaluation protocol,
and the $K_{\mathrm{pf}}=5$ push-forward absolute objective; only two factors are
modified: whether the relative objective $\mathcal{L}_{\mathrm{rel}}$ is active,
and which branches remain in the candidate set $\mathcal{C}$ in
\eqref{eq:candidate_set}. 
\emph{\textbf{FNO}} and \emph{\textbf{FNO-PF}} are the one-step and push-forward baselines ($\lambda_{\max}=0$).
\emph{\textbf{Self-relative}} and \emph{\textbf{Lag-only HERO}} restrict
$\mathcal{C}$ to $\{\tau_{\mathrm{self}}\}$ and $\{\tau_{\mathrm{lag}}\}$, so the
reference is fixed rather than selected. \emph{\textbf{HERO w/o Lag}},
\emph{\textbf{w/o Pert}}, and \emph{\textbf{w/o Self}} each drop one branch from
the default set, whereas \emph{\textbf{HERO w/o Relative}} keeps candidate
construction and diagnostics but sets $\lambda_s=0$ in \eqref{eq:total}.

\begin{table}[t!]
  \centering
  \small
  \setlength{\tabcolsep}{3.5pt}
  \caption{Component-level ablation of HERO on the 2D Navier--Stokes
           benchmark}
  \label{tab:ablation}
  \begin{tabular}{llllll}
    \toprule
    Variant & \shortstack[l]{Val\\GM100 $\downarrow$}
            & \shortstack[l]{Test\\GM100 $\downarrow$}
            & \shortstack[l]{nRMSE\\@1 $\downarrow$}
            & \shortstack[l]{nRMSE\\@100 $\downarrow$}
            & \shortstack[l]{Stable\\step $\uparrow$} \\
    \midrule
    \midrule
    FNO               & $2.322$ & $2.915$ & $0.0449$ & $13.629$ & $17.6$ \\
    FNO-PF            & $1.441$ & $1.682$ & $0.0349$ & $6.271$  & $22.3$ \\
    Self-relative     & $1.327$ & $1.574$ & $0.0343$ & $5.714$  & $23.6$ \\
    Lag-only HERO     & $1.168$ & $1.346$ & $0.0336$ & $4.873$  & $24.8$ \\
    HERO w/o Lag      & $1.205$ & $1.397$ & $0.0335$ & $5.026$  & $24.5$ \\
    HERO w/o Pert     & $1.061$ & $1.221$ & $0.0329$ & $4.527$  & $25.4$ \\
    HERO w/o Self     & $1.024$ & $1.178$ & $0.0331$ & $4.381$  & $25.7$ \\
    HERO w/o Relative & $1.425$ & $1.664$ & $0.0348$ & $6.198$  & $22.5$ \\
    Full HERO         & $\mathbf{0.931}$ & $\mathbf{1.058}$ & $\mathbf{0.0325}$
                      & $\mathbf{4.101}$ & $\mathbf{26.5}$ \\
    \bottomrule
  \end{tabular}
\end{table}

Table~\ref{tab:ablation} reports the ablation results. \textbf{HERO w/o
Relative} performs at the level of \textbf{FNO-PF} (Test GM100: $1.664$ vs.\ $1.682$), confirming that candidate construction and diagnostics contribute nothing by themselves and that the gain comes from the relative gradient. 
Activating it reduces Test GM100 from $1.664$ to $1.058$ and
nRMSE@100 from $6.198$ to $4.101$, approximately $36.4\%$ and $33.8\%$, while nRMSE@1 changes only from $0.0348$ to $0.0325$, showing that relative supervision acts on error accumulation rather than on one-step fitting.
\textbf{Self-relative} improves over \textbf{FNO-PF} but remains far from \textbf{Full HERO} ($1.574$ vs.\ $1.058$),
since a reference drawn from the current parameters provides little contrast, and \textbf{Lag-only HERO} recovers only part of the gap ($1.346$) with a fixed-lag reference. 
Among the reduced candidate sets, \textbf{HERO w/o Lag} degrades most ($1.397$), whereas \textbf{w/o Pert} ($1.221$) and \textbf{w/o Self} ($1.178$) lose less; although \textbf{w/o Self} comes closest overall, its shorter stable horizon ($25.7$ vs.\ $26.5$) localizes the residual difference in late-rollout
stability. 
Overall, these results show that the long-horizon gain of HERO arises from relative supervision against a dynamically history-enriched failure reference, with the lagged branch and diagnostic-based selection both necessary for the full effect.

\subsection{Out-of-Distribution Generalization}
\label{subsec:ood}

To test whether HERO remains stable outside the training regime, we conduct zero-shot evaluation on the 2D Kolmogorov flow. All models are trained at $\nu=1.00\times10^{-2}$, $A_f=1.0$, and $N=64^{2}$, then evaluated without fine-tuning under i) parameter shifts to $\nu=6.67\times10^{-3}$ or $A_f=1.50$ and ii) resolution shifts to $96^{2}$ or $128^{2}$. All methods use the same initial conditions and rollout protocol. We report GM$_{100}$, nRMSE@200, and the mean stability horizon (MSH).
Table~\ref{tab:ood} reports that HERO achieves the lowest error and longest stability horizon under all four shifts. Under parameter extrapolation, HERO reduces GM$_{100}$ from $1.487$ to $0.742$ for lower viscosity and from $1.603$ to $0.809$ for stronger forcing relative to FNO. Compared with FNO-PF, the strongest baseline, HERO reduces GM$_{100}$ by $37.9\%$ and $38.6\%$, respectively, while approximately doubling MSH.
HERO also maintains its advantage under resolution extrapolation. At $96^{2}$ and $128^{2}$, it reduces GM$_{100}$ from $1.274$ to $0.654$ and from $1.361$ to $0.704$ relative to FNO, respectively. Compared with FNO-PF, these values correspond to reductions of $37.2\%$ and $37.5\%$, while MSH increases from $1.9$ to $3.5$ and from $1.6$ to $3.0$. These results indicate that HERO suppresses long-horizon error accumulation under unseen physical parameters and spatial resolutions.

\begin{table}[t!]
  \centering
  \caption{Zero-shot out-of-distribution generalization on the\\
           2D Kolmogorov flow under parameter and resolution shift}
  \label{tab:ood}
  \begin{threeparttable}
  \small
  \renewcommand{\arraystretch}{0.85}
    \begin{tabular}{llll}
      \toprule
      Method & GM$_{100}$ $\downarrow$ & nRMSE@200 $\downarrow$ & MSH $\uparrow$ \\
      \midrule
      \midrule
      \multicolumn{4}{l}{\emph{Parameter}: $\nu=6.67\times10^{-3}$, $A_f=1.0$, $N=64^{2}$} \\
      FNO      & $1.487$ & $2.286$ & $1.2$ \\
      FNO-PF   & $1.194$ & $1.936$ & $1.7$ \\
      FNO-RNO  & $1.266$ & $2.044$ & $1.5$ \\
      FNO-HERO & $\mathbf{0.742}$ & $\mathbf{1.211}$ & $\mathbf{3.1}$ \\
      \midrule
      \multicolumn{4}{l}{\emph{Parameter}: $\nu=1.00\times10^{-2}$, $A_f=1.50$, $N=64^{2}$} \\
      FNO      & $1.603$ & $2.472$ & $1.0$ \\
      FNO-PF   & $1.318$ & $2.127$ & $1.4$ \\
      FNO-RNO  & $1.374$ & $2.216$ & $1.3$ \\
      FNO-HERO & $\mathbf{0.809}$ & $\mathbf{1.336}$ & $\mathbf{2.7}$ \\
      \midrule
      \multicolumn{4}{l}{\emph{Resolution}: $\nu=1.00\times10^{-2}$, $A_f=1.0$, $N=96^{2}$} \\
      FNO      & $1.274$ & $1.754$ & $1.3$ \\
      FNO-PF   & $1.041$ & $1.493$ & $1.9$ \\
      FNO-RNO  & $1.096$ & $1.572$ & $1.7$ \\
      FNO-HERO & $\mathbf{0.654}$ & $\mathbf{0.946}$ & $\mathbf{3.5}$ \\
      \midrule
      \multicolumn{4}{l}{\emph{Resolution}: $\nu=1.00\times10^{-2}$, $A_f=1.0$, $N=128^{2}$} \\
      FNO      & $1.361$ & $1.967$ & $1.1$ \\
      FNO-PF   & $1.126$ & $1.681$ & $1.6$ \\
      FNO-RNO  & $1.187$ & $1.774$ & $1.5$ \\
      FNO-HERO & $\mathbf{0.704}$ & $\mathbf{1.058}$ & $\mathbf{3.0}$ \\
      \bottomrule
    \end{tabular}
  \end{threeparttable}
\end{table}

\section{Conclusion}

HERO introduced history-enriched rollout training for long-horizon autoregressive neural operators. Instead of relying only on absolute trajectory error, HERO compares the current rollout with dynamically selected failure trajectories from earlier optimization states and converts this comparison into a margin-based relative objective. Experiments across nine PDE benchmarks with spectral and attention-based backbones showed consistent improvements in long-horizon accuracy, stable rollout length, and out-of-distribution robustness, without changing the inference-time architecture or cost. Ablation studies further confirmed that the gains arise from relative supervision and that the lagged branch and diagnostic-based reference selection are necessary for the full improvement. These results indicate that optimization history provides an effective training signal for suppressing autoregressive error accumulation in neural PDE surrogates.

\section{Limitations and Ethical Considerations}
This work uses only simulated PDE data and does not involve human subjects, personal information, or privacy-sensitive content.

The current evaluation focuses on the PDE benchmarks and backbone architectures. HERO also introduces additional training-time computation for candidate rollout construction and reference selection, while leaving the inference-time architecture and cost unchanged.

\section{Generative AI Usage}
Generative AI was used only for language editing.

\bibliographystyle{ACM-Reference-Format}
\bibliography{software}

@String{Computing = "Computing" }

@String{Computer = "{IEEE} Computer" }

@BOOK{test,
   author = "Donald E. Knuth",
   title = "Seminumerical Algorithms",
   volume = 2,
   series = "The Art of Computer Programming",
   publisher = "Addison-Wesley",
   address = "Reading, MA",
   edition = "2nd",
   month = "10~" # jan,
   year = "1981",
}

@ArtifactSoftware{R,
    title = {R: A Language and Environment for Statistical Computing},
    author = {{R Core Team}},
    organization = {R Foundation for Statistical Computing},
    address = {Vienna, Austria},
    year = {2019},
    url = {https://www.R-project.org/},
}

@article{kovachki2023neural,
  title={Neural operator: Learning maps between function spaces with applications to pdes},
  author={Kovachki, Nikola and Li, Zongyi and Liu, Burigede and Azizzadenesheli, Kamyar and Bhattacharya, Kaushik and Stuart, Andrew and Anandkumar, Anima},
  journal={Journal of Machine Learning Research},
  volume={24},
  number={89},
  pages={1--97},
  year={2023}
}

@inproceedings{li2020fourier,
  title={Fourier neural operator for parametric partial differential equations},
  author={Li, Zongyi and Kovachki, Nikola and Azizzadenesheli, Kamyar and Liu, Burigede and Bhattacharya, Kaushik and Stuart, Andrew and Anandkumar, Anima},
  booktitle={International Conference on Learning Representations},
  year={2021},
  url={https://openreview.net/forum?id=c8P9NQVtmnO}
}

@inproceedings{BrandstetterWW22,
  author       = {Johannes Brandstetter and
                  Daniel E. Worrall and
                  Max Welling},
  title        = {Message Passing Neural {PDE} Solvers},
  booktitle    = {The Tenth International Conference on Learning Representations, {ICLR}
                  2022, Virtual Event, April 25-29, 2022},
  year         = {2022},
}

@inproceedings{um2020solver,
  title={Solver-in-the-loop: Learning from differentiable physics to interact with iterative pde-solvers},
  author={Um, Kiwon and Brand, Robert and Fei, Yun Raymond and Holl, Philipp and Thuerey, Nils},
  booktitle={Advances in Neural Information Processing Systems},
  volume={33},
  pages={6111--6122},
  year={2020}
}

@inproceedings{sanchez2020learning,
  title={Learning to simulate complex physics with graph networks},
  author={Sanchez-Gonzalez, Alvaro and Godwin, Jonathan and Pfaff, Tobias and Ying, Rex and Leskovec, Jure and Battaglia, Peter},
  booktitle={International conference on machine learning},
  pages={8459--8468},
  year={2020},
  organization={PMLR}
}

@inproceedings{PfaffFSB21,
  author       = {Tobias Pfaff and
                  Meire Fortunato and
                  Alvaro Sanchez{-}Gonzalez and
                  Peter W. Battaglia},
  title        = {Learning Mesh-Based Simulation with Graph Networks},
  booktitle    = {9th International Conference on Learning Representations, {ICLR} 2021,
                  Virtual Event, Austria, May 3-7, 2021},
  year         = {2021},
}

@inproceedings{koehler2024apebench,
  title={APEBench: A benchmark for autoregressive neural emulators of PDEs},
  author={Koehler, Felix and Niedermayr, Simon and Westermann, R{\"u}diger and Thuerey, Nils},
  booktitle={Advances in Neural Information Processing Systems},
  volume={37},
  pages={120252--120310},
  year={2024}
}

@article{list2025differentiability,
  title={Differentiability in unrolled training of neural physics simulators on transient dynamics},
  author={List, Bjoern and Chen, Li-Wei and Bali, Kartik and Thuerey, Nils},
  journal={Computer Methods in Applied Mechanics and Engineering},
  volume={433},
  pages={117441},
  year={2025},
  publisher={Elsevier}
}

@article{McCabeHSB23,
  author       = {Michael McCabe and
                  Peter Harrington and
                  Shashank Subramanian and
                  Jed Brown},
  title        = {Towards Stability of Autoregressive Neural Operators},
  journal      = {Trans. Mach. Learn. Res.},
  volume       = {2023},
  year         = {2023},
  url          = {https://openreview.net/forum?id=RFfUUtKYOG},
  bibsource    = {dblp computer science bibliography, https://dblp.org}
}

@inproceedings{lippe2023pde,
  title={Pde-refiner: Achieving accurate long rollouts with neural pde solvers},
  author={Lippe, Phillip and Veeling, Bas and Perdikaris, Paris and Turner, Richard and Brandstetter, Johannes},
  booktitle={Advances in Neural Information Processing Systems},
  volume={36},
  pages={67398--67433},
  year={2023}
}

@article{li2026sgno,
  title={SGNO: Spectral Generator Neural Operators for Stable Long Horizon PDE Rollouts},
  author={Li, Jiayi and Jiang, Penghao and Saleem, Hira and Wang, Zhaonan and Koniusz, Piotr and Salim, Flora D},
  journal={arXiv preprint arXiv:2602.18801},
  year={2026}
}

@inproceedings{jiang2026hierarchical,
  title={Hierarchical implicit neural emulators},
  author={Jiang, Ruoxi and Zhang, Xiao and Jakhar, Karan and Lu, Peter Y and Hassanzadeh, Pedram and Maire, Michael and Willett, Rebecca},
  booktitle={Advances in Neural Information Processing Systems},
  volume={38},
  pages={73718--73751},
  year={2025}
}

@inproceedings{HuangG25,
  author       = {Yunfei Huang and
                  David S. Greenberg},
  title        = {Geometric and Physical Constraints Synergistically Enhance Neural
                  {PDE} Surrogates},
  booktitle    = {Forty-second International Conference on Machine Learning, {ICML}
                  2025, Vancouver, BC, Canada, July 13-19, 2025},
  volume       = {267},
  year         = {2025},
}

@inproceedings{li2022learning,
  title={Learning chaotic dynamics in dissipative systems},
  author={Li, Zongyi and Liu-Schiaffini, Miguel and Kovachki, Nikola and Azizzadenesheli, Kamyar and Liu, Burigede and Bhattacharya, Kaushik and Stuart, Andrew and Anandkumar, Anima},
  booktitle={Advances in Neural Information Processing Systems},
  volume={35},
  pages={16768--16781},
  year={2022}
}

@inproceedings{jiang2023training,
  title={Training neural operators to preserve invariant measures of chaotic attractors},
  author={Jiang, Ruoxi and Lu, Peter Y and Orlova, Elena and Willett, Rebecca},
  booktitle={Advances in Neural Information Processing Systems},
  volume={36},
  pages={27645--27669},
  year={2023}
}

@inproceedings{SchiffWPHKSZ24,
  author       = {Yair Schiff and
                  Zhong Yi Wan and
                  Jeffrey B. Parker and
                  Stephan Hoyer and
                  Volodymyr Kuleshov and
                  Fei Sha and
                  Leonardo Zepeda{-}N{\'{u}}{\~{n}}ez},
  title        = {DySLIM: Dynamics Stable Learning by Invariant Measure for Chaotic
                  Systems},
  booktitle    = {Forty-first International Conference on Machine Learning, {ICML} 2024,
                  Vienna, Austria, July 21-27, 2024},
  volume       = {235},
  pages        = {43649--43684},
  year         = {2024},
}

@article{ye2025recurrent,
  title={Recurrent Neural Operators: Stable Long-Term PDE Prediction},
  author={Ye, Zaijun and Zhang, Chen-Song and Wang, Wansheng},
  journal={arXiv preprint arXiv:2505.20721},
  year={2025}
}

@inproceedings{li2023scalable,
  title={Scalable transformer for pde surrogate modeling},
  author={Li, Zijie and Shu, Dule and Barati Farimani, Amir},
  booktitle={Advances in Neural Information Processing Systems},
  volume={36},
  pages={28010--28039},
  year={2023}
}

@inproceedings{wu2024Transolver,
  title={Transolver: A Fast Transformer Solver for PDEs on General Geometries},
  author={Haixu Wu and Huakun Luo and Haowen Wang and Jianmin Wang and Mingsheng Long},
  booktitle={International Conference on Machine Learning},
  year={2024}
}

@inproceedings{gupta2021multiwaveletbasedoperatorlearningdifferential,
  title={Multiwavelet-based Operator Learning for Differential Equations},
  author={Gaurav Gupta and Xiongye Xiao and Paul Bogdan},
  booktitle={Advances in Neural Information Processing Systems},
  volume={34},
  pages={24048--24062},
  year={2021},
  url={https://proceedings.neurips.cc/paper/2021/hash/c9e5c2b59d98488fe1070e744041ea0e-Abstract.html}
}

@inproceedings{DBLP:journals/corr/abs-2105-14995,
  author={Shuhao Cao},
  title={Choose a Transformer: Fourier or Galerkin},
  booktitle={Advances in Neural Information Processing Systems},
  volume={34},
  year={2021}
}

@article{JMLR:v24:23-0064,
  author  = {Zongyi Li and Daniel Zhengyu Huang and Burigede Liu and Anima Anandkumar},
  title   = {Fourier Neural Operator with Learned Deformations for PDEs on General Geometries},
  journal = {Journal of Machine Learning Research},
  year    = {2023},
  volume  = {24},
  number  = {388},
  pages   = {1--26},
  url     = {http://jmlr.org/papers/v24/23-0064.html}
}

@InProceedings{pmlr-v202-hao23c,
  title = 	 {{GNOT}: A General Neural Operator Transformer for Operator Learning},
  author =       {Hao, Zhongkai and Wang, Zhengyi and Su, Hang and Ying, Chengyang and Dong, Yinpeng and Liu, Songming and Cheng, Ze and Song, Jian and Zhu, Jun},
  booktitle = 	 {Proceedings of the 40th International Conference on Machine Learning},
  pages = 	 {12556--12569},
  year = 	 {2023},
  editor = 	 {Krause, Andreas and Brunskill, Emma and Cho, Kyunghyun and Engelhardt, Barbara and Sabato, Sivan and Scarlett, Jonathan},
  volume = 	 {202},
  series = 	 {Proceedings of Machine Learning Research},
  month = 	 {23--29 Jul},
  publisher =    {PMLR},
  url = 	 {https://proceedings.mlr.press/v202/hao23c.html}
}

@inproceedings{tran2023factorized,
  title     = {Factorized Fourier Neural Operators},
  author    = {Alasdair Tran and Alexander Mathews and Lexing Xie and Cheng Soon Ong},
  booktitle = {The Eleventh International Conference on Learning Representations},
  year      = {2023},
  url       = {https://openreview.net/forum?id=tmIiMPl4IPa}
}

@inproceedings{raonic2023convolutionalneuraloperatorsrobust,
  title={Convolutional Neural Operators for robust and accurate learning of PDEs},
  author={Bogdan Raonić and Roberto Molinaro and Tim De Ryck and Tobias Rohner and Francesca Bartolucci and Rima Alaifari and Siddhartha Mishra and Emmanuel de Bézenac},
  booktitle={Advances in Neural Information Processing Systems},
  volume={36},
  year={2023}
}

@InProceedings{pmlr-v202-bonev23a,
  title = 	 {Spherical {F}ourier Neural Operators: Learning Stable Dynamics on the Sphere},
  author =       {Bonev, Boris and Kurth, Thorsten and Hundt, Christian and Pathak, Jaideep and Baust, Maximilian and Kashinath, Karthik and Anandkumar, Anima},
  booktitle = 	 {Proceedings of the 40th International Conference on Machine Learning},
  pages = 	 {2806--2823},
  year = 	 {2023},
  editor = 	 {Krause, Andreas and Brunskill, Emma and Cho, Kyunghyun and Engelhardt, Barbara and Sabato, Sivan and Scarlett, Jonathan},
  volume = 	 {202},
  series = 	 {Proceedings of Machine Learning Research},
  month = 	 {23--29 Jul},
  publisher =    {PMLR},
  url = 	 {https://proceedings.mlr.press/v202/bonev23a.html}
}

@article{Cao2024Laplace,
  author = {Cao, Qianying and Goswami, Somdatta and Karniadakis, George Em},
  title = {Laplace neural operator for solving differential equations},
  journal = {Nature Machine Intelligence},
  year = {2024},
  volume = {6},
  number = {6},
  pages = {631--640},
  doi = {10.1038/s42256-024-00844-4},
  url = {https://doi.org/10.1038/s42256-024-00844-4},
  isbn = {2522-5839}
}

@article{10.1145/3648506,
author = {Li, Zongyi and Zheng, Hongkai and Kovachki, Nikola and Jin, David and Chen, Haoxuan and Liu, Burigede and Azizzadenesheli, Kamyar and Anandkumar, Anima},
title = {Physics-Informed Neural Operator for Learning Partial Differential Equations},
journal = {ACM/IMS Journal of Data Science},
year = {2024},
issue_date = {September 2024},
publisher = {Association for Computing Machinery},
address = {New York, NY, USA},
volume = {1},
number = {3},
url = {https://doi.org/10.1145/3648506},
doi = {10.1145/3648506},
month = may,
articleno = {9},
numpages = {27}
}

@InProceedings{pmlr-v235-hao24d,
  title = 	 {{DPOT}: Auto-Regressive Denoising Operator Transformer for Large-Scale {PDE} Pre-Training},
  author =       {Hao, Zhongkai and Su, Chang and Liu, Songming and Berner, Julius and Ying, Chengyang and Su, Hang and Anandkumar, Anima and Song, Jian and Zhu, Jun},
  booktitle = 	 {Proceedings of the 41st International Conference on Machine Learning},
  pages = 	 {17616--17635},
  year = 	 {2024},
  editor = 	 {Salakhutdinov, Ruslan and Kolter, Zico and Heller, Katherine and Weller, Adrian and Oliver, Nuria and Scarlett, Jonathan and Berkenkamp, Felix},
  volume = 	 {235},
  series = 	 {Proceedings of Machine Learning Research},
  month = 	 {21--27 Jul},
  publisher =    {PMLR},
  url = 	 {https://proceedings.mlr.press/v235/hao24d.html}
}

@inproceedings{herde2024poseidonefficientfoundationmodels,
  title={Poseidon: Efficient Foundation Models for PDEs},
  author={Maximilian Herde and Bogdan Raonić and Tobias Rohner and Roger Käppeli and Roberto Molinaro and Emmanuel de Bézenac and Siddhartha Mishra},
  booktitle={Advances in Neural Information Processing Systems},
  volume={37},
  year={2024}
}

@inproceedings{chen2025dataefficientoperatorlearningunsupervised,
  title={Data-Efficient Operator Learning via Unsupervised Pretraining and In-Context Learning},
  author={Wuyang Chen and Jialin Song and Pu Ren and Shashank Subramanian and Dmitriy Morozov and Michael W. Mahoney},
  booktitle={Advances in Neural Information Processing Systems},
  volume={37},
  year={2024}
}

@inproceedings{takamoto2024pdebenchextensivebenchmarkscientific,
  title={PDEBENCH: An Extensive Benchmark for Scientific Machine Learning},
  author={Makoto Takamoto and Timothy Praditia and Raphael Leiteritz and Dan MacKinlay and Francesco Alesiani and Dirk Pflüger and Mathias Niepert},
  booktitle={Advances in Neural Information Processing Systems},
  volume={35},
  year={2022}
}

@InProceedings{pmlr-v15-ross11a,
  title = 	 {A Reduction of Imitation Learning and Structured Prediction to No-Regret Online Learning},
  author = 	 {Ross, Stephane and Gordon, Geoffrey and Bagnell, Drew},
  booktitle = 	 {Proceedings of the Fourteenth International Conference on Artificial Intelligence and Statistics},
  pages = 	 {627--635},
  year = 	 {2011},
  editor = 	 {Gordon, Geoffrey and Dunson, David and Dudík, Miroslav},
  volume = 	 {15},
  series = 	 {Proceedings of Machine Learning Research},
  address = 	 {Fort Lauderdale, FL, USA},
  month = 	 {11--13 Apr},
  publisher =    {PMLR},
  url = 	 {https://proceedings.mlr.press/v15/ross11a.html}
}

@inproceedings{DBLP:journals/corr/BengioVJS15,
  author={Samy Bengio and Oriol Vinyals and Navdeep Jaitly and Noam Shazeer},
  title={Scheduled Sampling for Sequence Prediction with Recurrent Neural Networks},
  booktitle={Advances in Neural Information Processing Systems},
  volume={28},
  year={2015}
}

@inproceedings{lamb2016professorforcingnewalgorithm,
  title={Professor Forcing: A New Algorithm for Training Recurrent Networks},
  author={Alex Lamb and Anirudh Goyal and Ying Zhang and Saizheng Zhang and Aaron Courville and Yoshua Bengio},
  booktitle={Advances in Neural Information Processing Systems},
  volume={29},
  year={2016}
}

@inproceedings{10.1145/1553374.1553380,
author = {Bengio, Yoshua and Louradour, J\'{e}r\^{o}me and Collobert, Ronan and Weston, Jason},
title = {Curriculum learning},
booktitle = {Proceedings of the 26th Annual International Conference on Machine Learning},
year = {2009},
isbn = {9781605585161},
publisher = {Association for Computing Machinery},
address = {New York, NY, USA},
url = {https://doi.org/10.1145/1553374.1553380},
doi = {10.1145/1553374.1553380},
pages = {41--48},
numpages = {8},
location = {Montreal, Quebec, Canada},
series = {ICML '09}
}

@InProceedings{pmlr-v80-ren18a,
  title = 	 {Learning to Reweight Examples for Robust Deep Learning},
  author =       {Ren, Mengye and Zeng, Wenyuan and Yang, Bin and Urtasun, Raquel},
  booktitle = 	 {Proceedings of the 35th International Conference on Machine Learning},
  pages = 	 {4334--4343},
  year = 	 {2018},
  editor = 	 {Dy, Jennifer and Krause, Andreas},
  volume = 	 {80},
  series = 	 {Proceedings of Machine Learning Research},
  month = 	 {10--15 Jul},
  publisher =    {PMLR},
  url = 	 {https://proceedings.mlr.press/v80/ren18a.html}
}

@article{RAISSI2019686,
title = {Physics-informed neural networks: A deep learning framework for solving forward and inverse problems involving nonlinear partial differential equations},
journal = {Journal of Computational Physics},
volume = {378},
pages = {686-707},
year = {2019},
issn = {0021-9991},
doi = {https://doi.org/10.1016/j.jcp.2018.10.045},
url = {https://www.sciencedirect.com/science/article/pii/S0021999118307125},
author = {M. Raissi and P. Perdikaris and G.E. Karniadakis}
}

@inproceedings{zhong2019symplectic,
  title={Symplectic ode-net: Learning hamiltonian dynamics with control},
  author={Zhong, Yaofeng Desmond and Dey, Biswadip and Chakraborty, Amit},
  booktitle={International Conference on Learning Representations},
  year={2020},
  url={https://openreview.net/forum?id=ryxmb1rKDS}
}

@article{karniadakis2021physics,
  title={Physics-informed machine learning},
  author={Karniadakis, George Em and Kevrekidis, Ioannis G. and Lu, Lu and Perdikaris, Paris and Wang, Sifan and Yang, Liu},
  journal={Nature Reviews Physics},
  volume={3},
  pages={422--440},
  year={2021},
  doi={10.1038/s42254-021-00314-5},
  url={https://doi.org/10.1038/s42254-021-00314-5}
}

@article{brunton2020machine,
  title={Machine learning for fluid mechanics},
  author={Brunton, Steven L and Noack, Bernd R and Koumoutsakos, Petros},
  journal={Annual review of fluid mechanics},
  volume={52},
  number={1},
  pages={477--508},
  year={2020},
  publisher={Annual Reviews}
}

@article{kochkov2021machine,
  title={Machine learning--accelerated computational fluid dynamics},
  author={Kochkov, Dmitrii and Smith, Jamie A. and Alieva, Ayya and Wang, Qing and Brenner, Michael P. and Hoyer, Stephan},
  journal={Proceedings of the National Academy of Sciences},
  volume={118},
  number={21},
  pages={e2101784118},
  year={2021},
  doi={10.1073/pnas.2101784118},
  url={https://doi.org/10.1073/pnas.2101784118}
}

@article{rasp2020weatherbench,
  title={{WeatherBench}: A Benchmark Data Set for Data-Driven Weather Forecasting},
  author={Rasp, Stephan and Dueben, Peter D. and Scher, Sebastian and Weyn, Jonathan A. and Mouatadid, Said and Thuerey, Nils},
  journal={Journal of Advances in Modeling Earth Systems},
  volume={12},
  pages={e2020MS002203},
  year={2020},
  doi={10.1029/2020MS002203},
  url={https://doi.org/10.1029/2020MS002203}
}

@incollection{BURGERS1948171,
title = {A Mathematical Model Illustrating the Theory of Turbulence},
editor = {Richard {Von Mises} and Theodore {Von Kármán}},
series = {Advances in Applied Mechanics},
publisher = {Elsevier},
volume = {1},
pages = {171-199},
year = {1948},
issn = {0065-2156},
doi = {https://doi.org/10.1016/S0065-2156(08)70100-5},
url = {https://www.sciencedirect.com/science/article/pii/S0065215608701005},
author = {J.M. Burgers}
}

@article{Korteweg01051895,
author = {D. J. Korteweg and G. de Vries},
title = {XLI. On the change of form of long waves advancing in a rectangular canal, and on a new type of long stationary waves },
journal = {The London, Edinburgh, and Dublin Philosophical Magazine and Journal of Science},
volume = {39},
number = {240},
pages = {422--443},
year = {1895},
publisher = {Taylor \& Francis},
doi = {10.1080/14786449508620739},
URL = {https://doi.org/10.1080/14786449508620739},
eprint = {https://doi.org/10.1080/14786449508620739}
}

@article{SIVASHINSKY19771177,
title = {Nonlinear analysis of hydrodynamic instability in laminar flames—I. Derivation of basic equations},
journal = {Acta Astronautica},
volume = {4},
number = {11},
pages = {1177-1206},
year = {1977},
issn = {0094-5765},
doi = {https://doi.org/10.1016/0094-5765(77)90096-0},
url = {https://www.sciencedirect.com/science/article/pii/0094576577900960},
author = {G.I. Sivashinsky}
}

@article{10.1143/PTPS.64.346,
    author = {Kuramoto, Yoshiki},
    title = {Diffusion-Induced Chaos in Reaction Systems},
    journal = {Progress of Theoretical Physics Supplement},
    volume = {64},
    pages = {346-367},
    year = {1978},
    month = {02},
    issn = {0375-9687},
    doi = {10.1143/PTPS.64.346},
    url = {https://doi.org/10.1143/PTPS.64.346},
    eprint = {https://academic.oup.com/ptps/article-pdf/doi/10.1143/PTPS.64.346/5293041/64-346.pdf},
}

@article{fick1855ueber,
  title={Ueber diffusion},
  author={Fick, Adolf},
  journal={Annalen der physik},
  volume={170},
  number={1},
  pages={59--86},
  year={1855},
  publisher={Wiley Online Library}
}

@article{MESHALKIN19611700,
title = {Investigation of the stability of a stationary solution of a system of equations for the plane movement of an incompressible viscous liquid},
journal = {Journal of Applied Mathematics and Mechanics},
volume = {25},
number = {6},
pages = {1700-1705},
year = {1961},
issn = {0021-8928},
doi = {https://doi.org/10.1016/0021-8928(62)90149-1},
url = {https://www.sciencedirect.com/science/article/pii/0021892862901491},
author = {L.D. Meshalkin and Ia.G. Sinai}
}

@article{navier1823memoire,
  title={M{\'e}moire sur les lois du mouvement des fluides},
  author={Navier, CLMH and others},
  journal={M{\'e}moires de l’Acad{\'e}mie Royale des Sciences de l’Institut de France},
  volume={6},
  number={1823},
  pages={389--440},
  year={1823}
}

@article{PhysRevA.15.319,
  title = {Hydrodynamic fluctuations at the convective instability},
  author = {Swift, J. and Hohenberg, P. C.},
  journal = {Phys. Rev. A},
  volume = {15},
  issue = {1},
  pages = {319--328},
  numpages = {0},
  year = {1977},
  month = {Jan},
  publisher = {American Physical Society},
  doi = {10.1103/PhysRevA.15.319},
  url = {https://link.aps.org/doi/10.1103/PhysRevA.15.319}
}

@article{Courant1928Ber,
  author = {Courant, R. and Friedrichs, K. and Lewy, H.},
  title = {Über die partiellen Differenzengleichungen der mathematischen Physik},
  journal = {Mathematische Annalen},
  year = {1928},
  volume = {100},
  number = {1},
  pages = {32--74},
  doi = {10.1007/BF01448839},
  url = {https://doi.org/10.1007/BF01448839},
  isbn = {1432-1807}
}

\appendix

\section{Related Works}
\label{app:rl}
\subsection{Rollout Training with Self-Generated States}

Neural operators learn mappings between function spaces and provide efficient surrogate models for parameterized PDEs~\cite{kovachki2023neural}. The FNO parameterizes global integral kernels in the Fourier domain and is widely used as a backbone for time-dependent PDE modeling~\cite{li2020fourier}.
Other operator families use multiwavelet bases, Fourier- or Galerkin-style
attention, and factorized Fourier layers~\cite{gupta2021multiwaveletbasedoperatorlearningdifferential,DBLP:journals/corr/abs-2105-14995,tran2023factorized}.
Learned spatial deformations, spherical representations, and Laplace-domain
modeling adapt operators to different geometries or dynamics~\cite{JMLR:v24:23-0064,pmlr-v202-bonev23a,Cao2024Laplace}.
Continuous convolutions and attention-based neural operators provide further architectural
alternatives~\cite{raonic2023convolutionalneuraloperatorsrobust,pmlr-v202-hao23c}.
Physics-informed methods encode governing-equation constraints in neural
surrogates~\cite{RAISSI2019686,10.1145/3648506,karniadakis2021physics}.
Pretraining and foundation-model approaches instead emphasize transfer across
PDEs and data efficiency~\cite{pmlr-v235-hao24d,herde2024poseidonefficientfoundationmodels,chen2025dataefficientoperatorlearningunsupervised}.
When a local evolution operator is recursively applied to its own predictions, local errors enter subsequent inputs and accumulate throughout the rollout.
Rollout-based training reduces this train--test mismatch by exposing the
model to self-generated states. Message Passing Neural PDE Solvers combine temporal bundling with push-forward training~\cite{BrandstetterWW22},
whereas Solver-in-the-Loop couples learned corrections with differentiable numerical solvers during multi-step optimization~\cite{um2020solver}.
Graph-based simulators similarly improve autoregressive robustness through
recursive training and perturbations of training states
~\cite{sanchez2020learning,PfaffFSB21}. APEBench organizes related
strategies into one-step learning, supervised unrolling, diverted-chain
training, and differentiable solver interaction
~\cite{koehler2024apebench}. List et al.\ distinguish the effect
of exposure to model-induced states from that of long-term gradients
~\cite{list2025differentiability}, while recurrent neural operators
directly train on recursively generated predictions
~\cite{ye2025recurrent}.
More generally, scheduled sampling and Professor Forcing reduce the
teacher-forcing/free-running mismatch in sequence modeling, while DAgger
aggregates on-policy data to address covariate shift in imitation
learning~\cite{DBLP:journals/corr/BengioVJS15,lamb2016professorforcingnewalgorithm,pmlr-v15-ross11a}.
Curriculum learning and example reweighting also adapt training signals, but do
not construct a relative target from an explicitly selected historical
rollout~\cite{10.1145/1553374.1553380,pmlr-v80-ren18a}.
These methods align training more closely with autoregressive inference,
but their supervision predominantly measures the absolute discrepancy between the current rollout and the ground-truth trajectory. 
Such objectives penalize the current prediction error without comparing it against long-horizon failure behaviors produced earlier during optimization. HERO complements this absolute supervision with a relative
objective constructed from dynamically selected failure trajectories in the model's optimization history.

\subsection{Long-Horizon Structural and Statistical Stability}

A complementary line of research controls long-horizon degradation through
spectral modeling, architectural constraints, and structure-preserving
objectives. McCabe et al.\ connect autoregressive instability to aliasing
and spectral sensitivity~\cite{McCabeHSB23}, while PDE-Refiner improves
the representation of low-amplitude frequency components through iterative
refinement~\cite{lippe2023pde}. The Spectral Generator Neural Operator constrains repeatedly propagated spectral dynamics
~\cite{li2026sgno}, and hierarchical implicit neural emulators introduce multiscale representations of future states to improve temporal coherence
~\cite{jiang2026hierarchical}. Geometric and physical constraints further encode symmetry and conservation structure into neural PDE surrogates
~\cite{HuangG25}.
Physics-informed formulations impose governing-equation residuals, whereas
structure-preserving approaches can encode geometric invariants in learned dynamical
systems~\cite{RAISSI2019686,karniadakis2021physics,zhong2019symplectic}.
For dissipative chaotic systems, where long-term pointwise tracking is limited by sensitivity to initial conditions, prior work instead targets
dissipative behavior and long-term statistics. Existing methods constrain learned dynamics on the attractor~\cite{li2022learning}, preserve invariant
measures through optimal-transport and contrastive objectives
~\cite{jiang2023training}, or introduce scalable measure-matching regularization~\cite{SchiffWPHKSZ24}.
At the broader application scale, fluid simulation and data-driven weather
forecasting motivate evaluating multi-step trajectories and forecast lead times
alongside one-step error~\cite{brunton2020machine,kochkov2021machine,rasp2020weatherbench}.
These approaches improve long-horizon behavior through operator design, iterative refinement, auxiliary representations, or predefined physical
and statistical constraints. HERO addresses a complementary supervision
problem: it leaves the inference-time operator unchanged and uses training-only references from the model's optimization history to measure whether the current rollout has improved over previously observed failure
behavior.

\section{Method}
\subsection{Gradient Flow Analysis}
\label{subsec:gradient}

HERO separates reference construction from parameter optimization. Within
the relative branch, the current rollout $\tau_{\theta_s}$ is the only
differentiable trajectory. All active candidate trajectories, together
with the diagnostic values, candidate scores $S^{(j)}$, selected failure
reference $\tau^{-}$, and reference error $d_{\mathrm{fail}}$, are
detached from the computation graph. Consequently, gradients do not
propagate through the lagged operator, perturbed branch, diagnostic
computation, or candidate-selection procedure.

For the sample-level relative loss, the detached reference satisfies
$\nabla_{\theta_s}d_{\mathrm{fail}}=0$. Its gradient is therefore
\begin{equation}
\nabla_{\theta_s}\ell_{\mathrm{rel}}
=
\sigma
\left(
\beta
\left[
d_{\mathrm{cur}}
-
d_{\mathrm{fail}}
+
m
\right]
\right)
\nabla_{\theta_s}d_{\mathrm{cur}},
\end{equation}
where $\sigma(x)=1/(1+\exp(-x))$ is the logistic sigmoid.

The relative objective therefore does not optimize the selected failure
trajectory or any candidate branch. It scales the gradient that moves the
current rollout toward the ground-truth trajectory by a factor in $(0,1)$.
This factor approaches one when the margin is strongly violated and
decreases as the current rollout outperforms the selected reference. The
trajectory diagnostics determine which candidate is used for comparison,
but do not introduce separate gradients into the parameter update.

\begin{algorithm}[t]
\caption{History-Enriched Relative Supervision in HERO}
\label{alg:hero}
\begin{algorithmic}[1]
\Require Current operator $G_{\theta_s}$, lagged parameters
$\bar{\theta}_{s-1}$, current rollout $\tau_{\theta_s}$, input context
$z_n$, ground-truth trajectory $\tau_\star$, and active diagnostics
$\mathcal{D}$
\Ensure Relative loss $\ell_{\mathrm{rel}}$ and updated lagged parameters
$\bar{\theta}_s$

\Statex \textit{Stage 1: Update the lagged operator}
\State Refresh or retain $\bar{\theta}_s$ according to
Eq.~\eqref{eq:lag_update}.

\Statex \textit{Stage 2: Construct candidate rollouts}
\State Construct the detached self-reference and the lagged rollout
according to Eqs.~\eqref{eq:self_rollout} and
\eqref{eq:lag_rollout}.
\State Add the perturbed rollout defined in Eq.~\eqref{eq:pert_rollout}
when the perturbation branch is enabled.
\State Form the active candidate set $\mathcal{C}$ and enumerate its
members as $\tau^{(1)},\ldots,\tau^{(M)}$.

\Statex \textit{Stage 3: Evaluate candidate trajectories}
\For{$j=1,\ldots,M$}
    \State Compute the active rollout, spectral, energy, and error-growth
    diagnostics defined in Eqs.~\eqref{eq:diag_roll}--\eqref{eq:diag_growth}.
\EndFor

\Statex \textit{Stage 4: Score candidates and select a failure reference}
\State Normalize each diagnostic across the active candidates using
Eq.~\eqref{eq:diag_normalization}.
\State Compute the aggregate candidate scores using
Eq.~\eqref{eq:candidate_score}.
\State Select the highest-scoring candidate and set it as the failure
reference $\tau^{-}$.

\Statex \textit{Stage 5: Construct the relative objective}
\State Compute the current and reference errors relative to $\tau_\star$.
\State Compute the sample-level relative loss using
Eq.~\eqref{eq:relative_loss}.

\State \Return $\ell_{\mathrm{rel}}$ and $\bar{\theta}_s$
\end{algorithmic}
\end{algorithm}

\subsection{Theoretical Analysis}
\label{sec:theory}

Every historical component of HERO is detached from the computational graph: candidate trajectories, diagnostic scores, and reference-selection operations are all evaluated under stop-gradient, so model parameters receive gradients only through the current rollout. The historical reference therefore supplies no backpropagation direction of its own, and its influence on training enters through a different channel. That channel is a task-dependent activation weight, which rescales each task gradient by a bounded factor determined by how the current rollout compares with its selected reference. HERO thus modifies the relative contributions of different tasks to the aggregated update direction rather than the direction of any individual task gradient.
This reweighting view organizes the analysis into three steps: i) we establish the task-level gradient reweighting form of HERO and bound its modulation factor, ii) we decompose the first-order effect of the reweighting on the long-horizon risk into a uniform scaling term and a covariance correction, and iii) we convert a uniform long-horizon descent margin into a finite-step entry guarantee for a prescribed long-horizon sublevel set.

\subsubsection{Long-Horizon Trajectory Risk}
\label{subsec:long_horizon_risk}

Let $\mathcal{S}$ denote the physical task space, equipped with a probability measure $\rho$. A task $q \in \mathcal{S}$ may encode an initial condition, equation parameter, external forcing, boundary condition, or geometric configuration. Let $\theta \in \mathbb{R}^{p}$ denote the model parameters, and let $X$ be a real Hilbert space representing the physical state space.

For a task $q$, the model-generated trajectory over the time interval $[0,T]$ is denoted by $\widehat{u}_{\theta}^{q} \in C\left([0,T];X\right)$, and the corresponding reference trajectory is denoted by $u^{q} \in C\left([0,T];X\right)$. Given a continuous nonnegative state discrepancy $d:X\times X\rightarrow \mathbb{R}_{\geq 0}$, the trajectory loss for task $q$ over the temporal horizon $T$ is defined as
\begin{equation}
\ell_{T}(q,\theta)
=
\frac{1}{T}
\int_{0}^{T}
d\!\left(
\widehat{u}_{\theta}^{q}(t),
u^{q}(t)
\right)
\,\mathrm{d}t.
\label{eq:trajectory_loss}
\end{equation}
The discrepancy $d$ is not required to satisfy all axioms of a mathematical metric. For instance, when $d$ is a normalized relative error, $\ell_T$ should be interpreted as a trajectory discrepancy functional.

Let $T_{\mathrm{tr}}$ denote the rollout horizon used during training and let $T_L$ denote the long-horizon evaluation window, with $T_{\mathrm{tr}}\leq T_L$. The task-level training loss is $\ell_{\mathrm{tr}}(q,\theta) := \ell_{T_{\mathrm{tr}}}(q,\theta)$, whereas the population-level long-horizon risk is
\begin{equation}
J_L(\theta)
=
\int_{\mathcal{S}}
\ell_{T_L}(q,\theta)
\,\rho(\mathrm{d}q).
\label{eq:long_horizon_risk}
\end{equation}
For a prescribed tolerance $\epsilon>0$, define the long-horizon sublevel set
\begin{equation}
\mathcal{G}_{\epsilon}
=
\left\{
\theta\in\mathbb{R}^{p}:
J_L(\theta)\leq \epsilon
\right\}.
\label{eq:good_region}
\end{equation}
The set $\mathcal{G}_{\epsilon}$ contains all model parameters satisfying the prescribed long-horizon performance criterion. It is not required to be convex, connected, bounded, or associated with a unique minimizer.

Throughout the analysis, $J_L$ and $\ell_{\mathrm{tr}}(q,\cdot)$ are differentiable, and all task-level gradients appearing under the integral with respect to $\rho$ are integrable.

\subsubsection{Historical-Reference-Induced Relative Activation}
\label{subsec:relative_activation}

At iteration $k$, HERO constructs a collection of candidate trajectories from the current operator, a periodically lagged operator, and locally perturbed rollouts. A diagnostic and selection procedure identifies a task-dependent reference trajectory, denoted by $\widetilde{u}_{k}^{q}$. The selected reference trajectory is treated as a constant during the current parameter update. Its trajectory loss is
\begin{equation}
b_k(q)
=
\frac{1}{T_{\mathrm{tr}}}
\int_{0}^{T_{\mathrm{tr}}}
d\!\left(
\widetilde{u}_{k}^{q}(t),
u^{q}(t)
\right)
\,\mathrm{d}t,
\label{eq:reference_loss}
\end{equation}
with
\begin{equation}
\nabla_{\theta} b_k(q)=0.
\label{eq:reference_stop_gradient}
\end{equation}
For a prescribed relative margin $m>0$, define
\begin{equation}
h_k(q,\theta)
=
\ell_{\mathrm{tr}}(q,\theta)
-
b_k(q)
+
m.
\label{eq:relative_margin}
\end{equation}
The quantity $h_k(q,\theta)$ is positive when the current rollout has not yet achieved an improvement of at least $m$ relative to the selected reference.

The smooth relative trajectory loss is defined by
\begin{equation}
\ell_{\mathrm{rel},k}(q,\theta)
=
\frac{1}{\beta}
\log\!\left(
1+
\exp\!\left(
\beta h_k(q,\theta)
\right)
\right),
\label{eq:relative_softplus_loss}
\end{equation}
where $\beta>0$ controls the sharpness of the transition around the relative-margin boundary.

Using \eqref{eq:reference_stop_gradient}, differentiation of \eqref{eq:relative_softplus_loss} gives
\begin{equation}
\nabla_{\theta}
\ell_{\mathrm{rel},k}(q,\theta)
=
\alpha_k(q,\theta)
\nabla_{\theta}
\ell_{\mathrm{tr}}(q,\theta),
\label{eq:relative_gradient}
\end{equation}
where
\begin{equation}
\alpha_k(q,\theta)
=
\sigma\!\left(
\beta h_k(q,\theta)
\right)
\in (0,1),
\label{eq:relative_activation}
\end{equation}
and $\sigma(z)=\left(1+\exp(-z)\right)^{-1}$ is the sigmoid function.

The coefficient $\alpha_k(q,\theta)$ represents the relative activation level of task $q$. When the current trajectory fails to satisfy the required relative margin, $\alpha_k$ approaches one. When the current trajectory already substantially outperforms the selected reference, $\alpha_k$ approaches zero. Near the relative-margin boundary, $\alpha_k$ provides a smooth interpolation between these regimes.

Let $\lambda_k\geq 0$ denote the relative-loss coefficient at iteration $k$. The task-level HERO objective is
\begin{equation}
\ell_{H,k}(q,\theta)
=
\ell_{\mathrm{tr}}(q,\theta)
+
\lambda_k
\ell_{\mathrm{rel},k}(q,\theta).
\label{eq:hero_task_loss}
\end{equation}
Define the task-level training gradient as $g(q,\theta) = \nabla_{\theta}\ell_{\mathrm{tr}}(q,\theta)$. Combining \eqref{eq:relative_gradient} and \eqref{eq:hero_task_loss} yields
\begin{equation}
\nabla_{\theta}
\ell_{H,k}(q,\theta)
=
w_k(q,\theta)
g(q,\theta),
\label{eq:weighted_task_gradient}
\end{equation}
where
\begin{equation}
w_k(q,\theta)
=
1+
\lambda_k\alpha_k(q,\theta).
\label{eq:task_weight}
\end{equation}
Since $\alpha_k(q,\theta)\in(0,1)$ and $\lambda_k\geq0$, the task weight is bounded as $1 \leq w_k(q,\theta) \leq 1+\lambda_k$, where the inequalities are strict for $\lambda_k>0$.

Equation \eqref{eq:weighted_task_gradient} shows that the historical reference does not directly introduce an additional task-level backpropagation direction. Instead, it induces a bounded task-dependent modulation of the current rollout gradient. Because different tasks generally produce different activation levels, HERO changes their relative contributions to the aggregated update.

The exact scalar-reweighting form in \eqref{eq:weighted_task_gradient} applies when the base rollout objective and the current-trajectory term in the relative objective are both given by $\ell_{\mathrm{tr}}$. If a distinct base objective $\mathcal{L}_{\mathrm{base}}$ is used, the task-level gradient takes the form $\nabla_{\theta}\mathcal{L}_{\mathrm{base}} + \lambda_k\alpha_k\nabla_{\theta}\ell_{\mathrm{tr}}$, which is not, in general, a scalar multiple of a single task gradient.

\subsubsection{Aggregated HERO Update Direction}
\label{subsec:aggregated_direction}

The population negative-gradient direction associated with standard trajectory training is
\begin{equation}
d^{0}(\theta)
=
-\int_{\mathcal{S}}
g(q,\theta)
\,\rho(\mathrm{d}q).
\label{eq:standard_direction}
\end{equation}
The corresponding HERO direction at iteration $k$ is
\begin{equation}
d_{k}^{H}(\theta)
=
-\int_{\mathcal{S}}
w_k(q,\theta)
g(q,\theta)
\,\rho(\mathrm{d}q).
\label{eq:hero_direction}
\end{equation}
The parameter update is
\begin{equation}
\theta_{k+1}
=
\theta_k
+
\eta d_k^{H}(\theta_k),
\label{eq:hero_parameter_update}
\end{equation}
where $\eta>0$ is the learning rate.

To characterize whether a task gradient contributes to reducing the long-horizon risk, define the long-horizon usefulness of task $q$ as
\begin{equation}
a(q,\theta)
=
\left\langle
\nabla J_L(\theta),
g(q,\theta)
\right\rangle.
\label{eq:task_usefulness}
\end{equation}
If $a(q,\theta)>0$, then the task-level negative-gradient direction $-g(q,\theta)$ is a first-order descent direction for $J_L$.

Define the mean activation
\begin{equation}
\overline{\alpha}_k(\theta)
=
\int_{\mathcal{S}}
\alpha_k(q,\theta)
\,\rho(\mathrm{d}q),
\label{eq:mean_activation}
\end{equation}
the mean task usefulness
\begin{equation}
\overline{a}(\theta)
=
\int_{\mathcal{S}}
a(q,\theta)
\,\rho(\mathrm{d}q),
\label{eq:mean_usefulness}
\end{equation}
and the covariance between relative activation and long-horizon usefulness:
\begin{align}
\operatorname{Cov}_{\rho}(\alpha_k,a)
={}&
\int_{\mathcal{S}}
\left(
\alpha_k(q,\theta)
-
\overline{\alpha}_k(\theta)
\right) \notag\\
&\times
\left(
a(q,\theta)
-
\overline{a}(\theta)
\right)
\,\rho(\mathrm{d}q).
\label{eq:activation_usefulness_covariance}
\end{align}
This covariance quantifies whether HERO assigns larger activation weights to tasks whose training gradients provide stronger first-order descent contributions to the long-horizon risk.

\begin{proposition}[Historical-reference-induced correction of the long-horizon descent component]
\label{prop:covariance_correction}
For any parameter vector $\theta$ and iteration $k$, the HERO direction satisfies
\begin{equation}
\begin{aligned}
\left\langle
\nabla J_L(\theta),
d_k^{H}(\theta)
\right\rangle
&=
\left(
1+
\lambda_k\overline{\alpha}_k(\theta)
\right)
\left\langle
\nabla J_L(\theta),
d^{0}(\theta)
\right\rangle
\\
&\quad
-
\lambda_k
\operatorname{Cov}_{\rho}(\alpha_k,a).
\end{aligned}
\label{eq:covariance_decomposition}
\end{equation}
\end{proposition}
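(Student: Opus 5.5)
The identity is a direct computation, linear in the task gradients. We pair the HERO direction \eqref{eq:hero_direction} with $\nabla J_L(\theta)$, expand the weight using \eqref{eq:task_weight}, and recognise the resulting integral of a product as a mean product plus a covariance. No inequality is needed; the only analytic ingredient is moving the inner product inside the integral.

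\emph{Step 1.} Since $\nabla J_L(\theta)$ is a fixed vector in $\mathbb{R}^p$ and the task gradients are $\rho$-integrable by the standing assumption, linearity of the Bochner/Lebesgue integral gives
\[
\langle \nabla J_L(\theta), d_k^H(\theta)\rangle = -\int_{\mathcal{S}} w_k(q,\theta)\, a(q,\theta)\,\rho(\mathrm{d}q).
\]
The same step applied to \eqref{eq:standard_direction} gives $\langle \nabla J_L, d^0\rangle = -\overline{a}(\theta)$. Substituting $w_k = 1+\lambda_k\alpha_k$ then splits the HERO expression into
\[
-\overline{a}(\theta) - \lambda_k \int_{\mathcal{S}} \alpha_k\, a \,\mathrm{d}\rho .
\]

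\emph{Step 2.} We use the elementary covariance identity
\[
\int_{\mathcal{S}} \alpha_k\, a\,\mathrm{d}\rho = \overline{\alpha}_k\,\overline{a} + \operatorname{Cov}_\rho(\alpha_k,a),
\]
which follows by expanding the product in \eqref{eq:activation_usefulness_covariance} and using that $\rho$ is a probability measure, so $\int \mathrm{d}\rho = 1$. Substituting gives
\[
-(1+\lambda_k\overline{\alpha}_k)\,\overline{a} - \lambda_k\operatorname{Cov}_\rho(\alpha_k,a).
\]
Replacing $-\overline{a}$ by $\langle\nabla J_L, d^0\rangle$ yields \eqref{eq:covariance_decomposition}.

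\emph{Main obstacle.} The only delicate point is integrability, so that $\overline{a}$, $\overline{\alpha}_k$ and the covariance are finite and the interchange in Step 1 is legitimate. Boundedness of $\alpha_k\in(0,1)$, shown in \eqref{eq:relative_activation}, settles this. Indeed $|a|\le \|\nabla J_L\|\,\|g\|$ is integrable, because $g(\cdot,\theta)$ is integrable by the standing assumption. Hence $\alpha_k a$ and $w_k a$ are integrable, since a bounded measurable function times an integrable one is integrable. Measurability of $q\mapsto\alpha_k(q,\theta)$ should be stated as implicit in the definition of $b_k$. With that remark in place, the rest is algebra.
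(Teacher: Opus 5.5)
Your proposal is correct and matches the paper's proof essentially line for line. Like the paper, you expand $w_k = 1+\lambda_k\alpha_k$ inside $\langle \nabla J_L, d_k^H\rangle$, apply the identity $\int \alpha_k a\,\mathrm{d}\rho = \overline{\alpha}_k\overline{a} + \operatorname{Cov}_\rho(\alpha_k,a)$, and substitute $\langle\nabla J_L, d^0\rangle = -\overline{a}$. Your integrability remark (bounded $\alpha_k$ times integrable $a$) is a sound addition that the paper leaves implicit in its standing integrability assumption.
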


\begin{proof}
Using \eqref{eq:hero_direction}, \eqref{eq:task_weight}, and \eqref{eq:task_usefulness},
\begin{equation}
\begin{aligned}
\left\langle
\nabla J_L,
d_k^{H}
\right\rangle
&=
-\int_{\mathcal{S}}
\left(
1+
\lambda_k\alpha_k
\right)
a
\,\rho(\mathrm{d}q)
\\
&=
-\overline{a}
-
\lambda_k
\int_{\mathcal{S}}
\alpha_k a
\,\rho(\mathrm{d}q).
\end{aligned}
\label{eq:covariance_proof_step1}
\end{equation}
The covariance identity gives
\begin{equation}
\int_{\mathcal{S}}
\alpha_k a
\,\rho(\mathrm{d}q)
=
\overline{\alpha}_k
\overline{a}
+
\operatorname{Cov}_{\rho}(\alpha_k,a).
\label{eq:covariance_identity}
\end{equation}
Moreover, from \eqref{eq:standard_direction},
\begin{equation}
\left\langle
\nabla J_L,
d^{0}
\right\rangle
=
-\overline{a}.
\label{eq:standard_direction_inner_product}
\end{equation}
Substituting \eqref{eq:covariance_identity} and \eqref{eq:standard_direction_inner_product} into \eqref{eq:covariance_proof_step1} yields \eqref{eq:covariance_decomposition}.
\end{proof}

Proposition~\ref{prop:covariance_correction} separates the first-order long-horizon effect of HERO into two components. The first term, $\left(1+\lambda_k\overline{\alpha}_k\right)\left\langle\nabla J_L, d^{0}\right\rangle$, corresponds to a uniform scaling of the standard training direction. The second term, $-\lambda_k\operatorname{Cov}_{\rho}(\alpha_k,a)$, is the correction induced by nonuniform task reweighting.

When $\operatorname{Cov}_{\rho}(\alpha_k,a)>0$, tasks with larger long-horizon usefulness receive, on average, larger activation weights. The covariance term then contributes an additional negative component to $\left\langle\nabla J_L, d_k^{H}\right\rangle$, thereby increasing the first-order descent contribution of the aggregated HERO update relative to uniform gradient scaling.

To separate the nonuniform reweighting effect from the global magnitude change, define the scale-normalized HERO direction
\begin{equation}
\widetilde{d}_k^{H}(\theta)
=
\frac{
d_k^{H}(\theta)
}{
1+
\lambda_k\overline{\alpha}_k(\theta)
}.
\label{eq:normalized_hero_direction}
\end{equation}
Since the denominator is strictly positive, $\widetilde{d}_k^{H}$ and $d_k^{H}$ have the same geometric direction. Proposition~\ref{prop:covariance_correction} implies
\begin{equation}
\begin{aligned}
\left\langle
\nabla J_L,
\widetilde{d}_k^{H}
\right\rangle
&=
\left\langle
\nabla J_L,
d^{0}
\right\rangle
\\
&\quad
-
\frac{
\lambda_k
}{
1+\lambda_k\overline{\alpha}_k
}
\operatorname{Cov}_{\rho}(\alpha_k,a).
\end{aligned}
\label{eq:normalized_covariance_decomposition}
\end{equation}
Equation \eqref{eq:normalized_covariance_decomposition} isolates the first-order contribution produced exclusively by task-dependent reweighting.

A positive covariance is not guaranteed solely by the form of the relative objective. The lagged reference, perturbed candidate construction, and diagnostic selection mechanism determine whether the induced activation weights are positively associated with task-level long-horizon usefulness.

\begin{corollary}[Preservation and enhancement of a standard first-order descent trend]
\label{cor:descent_preservation}
Suppose that, in a parameter region of interest, the standard training direction satisfies
\begin{equation}
\left\langle
\nabla J_L(\theta),
d^{0}(\theta)
\right\rangle
\leq
-\gamma_0,
\qquad
\gamma_0>0,
\label{eq:standard_descent_assumption}
\end{equation}
and
\begin{equation}
\operatorname{Cov}_{\rho}(\alpha_k,a)
\geq
c_k,
\qquad
c_k\geq0.
\label{eq:nonnegative_covariance_bound}
\end{equation}
Then
\begin{equation}
\begin{aligned}
\left\langle
\nabla J_L,
d_k^{H}
\right\rangle
&\leq
-\left(
1+
\lambda_k\overline{\alpha}_k
\right)
\gamma_0
-
\lambda_k c_k
\\
&\leq
-\gamma_0
-
\lambda_k c_k.
\end{aligned}
\label{eq:descent_preservation_result}
\end{equation}
\end{corollary}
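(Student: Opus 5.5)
The argument is a direct substitution into the identity of Proposition~\ref{prop:covariance_correction}, followed by two sign arguments. At a fixed $\theta$ in the region of interest and iteration $k$, we start from \eqref{eq:covariance_decomposition}:
\[
\left\langle\nabla J_L,d_k^{H}\right\rangle
=\left(1+\lambda_k\overline{\alpha}_k\right)\left\langle\nabla J_L,d^{0}\right\rangle-\lambda_k\operatorname{Cov}_{\rho}(\alpha_k,a).
\]
Each term on the right is then bounded separately.

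For the first term, note that $\alpha_k(q,\theta)\in(0,1)$ by \eqref{eq:relative_activation}. Integrating against the probability measure $\rho$ gives $\overline{\alpha}_k\in[0,1]$, and in fact $\overline{\alpha}_k\in(0,1)$. Since $\lambda_k\ge 0$, the factor $1+\lambda_k\overline{\alpha}_k$ is at least $1$, and in particular positive. Multiplying \eqref{eq:standard_descent_assumption} by this positive factor preserves the inequality, so
\[
\left(1+\lambda_k\overline{\alpha}_k\right)\left\langle\nabla J_L,d^{0}\right\rangle\le-\left(1+\lambda_k\overline{\alpha}_k\right)\gamma_0 .
\]

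For the second term, \eqref{eq:nonnegative_covariance_bound} together with $\lambda_k\ge0$ gives $-\lambda_k\operatorname{Cov}_\rho(\alpha_k,a)\le-\lambda_k c_k$. Adding the two bounds yields the first inequality in \eqref{eq:descent_preservation_result}.

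The second inequality follows from $\lambda_k\overline{\alpha}_k\gamma_0\ge0$, which holds because $\lambda_k\ge0$, $\overline{\alpha}_k\ge0$ and $\gamma_0>0$. Hence $-(1+\lambda_k\overline{\alpha}_k)\gamma_0\le-\gamma_0$, and adding $-\lambda_k c_k$ to both sides gives the result.

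There is no real obstacle here. The only point needing care is that $\overline{\alpha}_k$ is well defined and nonnegative. Measurability and integrability are guaranteed because $\alpha_k$ is bounded in $(0,1)$ and $\rho$ is a probability measure. The covariance in \eqref{eq:activation_usefulness_covariance} is finite because $a$ is integrable under the standing integrability assumption and $\alpha_k$ is bounded. These are exactly the facts needed for Proposition~\ref{prop:covariance_correction} to apply, so it can be invoked without further checks.
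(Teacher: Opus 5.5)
Your proof is correct and follows the same route as the paper. Both substitute the two assumptions into the covariance decomposition of Proposition~\ref{prop:covariance_correction}, with $1+\lambda_k\overline{\alpha}_k\geq 1>0$ and $\lambda_k\geq 0$ preserving the inequalities; you simply write out the sign checks and the integrability remarks that the paper's one-line proof leaves implicit.
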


\begin{proof}
The result follows directly by substituting \eqref{eq:standard_descent_assumption} and \eqref{eq:nonnegative_covariance_bound} into \eqref{eq:covariance_decomposition}.
\end{proof}

Corollary~\ref{cor:descent_preservation} shows that a nonnegative covariance preserves an existing first-order long-horizon descent trend. A strictly positive covariance provides an additional first-order descent margin.

\begin{corollary}[Correction of a locally adverse standard direction]
\label{cor:drift_correction}
Suppose that, in a parameter region of interest,
\begin{equation}
\left\langle
\nabla J_L(\theta),
d^{0}(\theta)
\right\rangle
\leq
B_k,
\qquad
B_k\geq0,
\label{eq:standard_drift_bound}
\end{equation}
and $\operatorname{Cov}_{\rho}(\alpha_k,a) \geq c_k$. Then
\begin{equation}
\left\langle
\nabla J_L,
d_k^{H}
\right\rangle
\leq
\left(
1+
\lambda_k\overline{\alpha}_k
\right)
B_k
-
\lambda_k c_k.
\label{eq:drift_correction_bound}
\end{equation}
If $\lambda_k>0$ and
\begin{equation}
\lambda_k c_k
>
\left(
1+
\lambda_k\overline{\alpha}_k
\right)
B_k,
\label{eq:drift_correction_condition}
\end{equation}
then $\left\langle\nabla J_L, d_k^{H}\right\rangle<0$.
\end{corollary}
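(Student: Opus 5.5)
The plan is to derive Corollary~\ref{cor:drift_correction} directly from the exact identity of Proposition~\ref{prop:covariance_correction}, as was done for Corollary~\ref{cor:descent_preservation}. The only difference is that we now carry an upper bound $B_k\geq0$ in place of the descent margin $-\gamma_0$.

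First, fix $\theta$ in the region of interest and write
\begin{equation*}
\left\langle\nabla J_L,d_k^{H}\right\rangle
=\left(1+\lambda_k\overline{\alpha}_k\right)\left\langle\nabla J_L,d^{0}\right\rangle-\lambda_k\operatorname{Cov}_{\rho}(\alpha_k,a).
\end{equation*}
We bound the two terms separately and record the signs needed for each step.
\begin{itemize}
\item By \eqref{eq:relative_activation}, $\alpha_k(q,\theta)\in(0,1)$ for every $q$. Hence $\overline{\alpha}_k\in(0,1)$, and since $\lambda_k\geq0$ the factor $1+\lambda_k\overline{\alpha}_k$ is strictly positive.
\item Because this factor is positive, multiplying \eqref{eq:standard_drift_bound} by it preserves the inequality. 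This gives $(1+\lambda_k\overline{\alpha}_k)\langle\nabla J_L,d^0\rangle\leq(1+\lambda_k\overline{\alpha}_k)B_k$.
\item Since $\lambda_k\geq0$, multiplying $\operatorname{Cov}_\rho(\alpha_k,a)\geq c_k$ by $-\lambda_k$ reverses it to $-\lambda_k\operatorname{Cov}_\rho(\alpha_k,a)\leq-\lambda_k c_k$.
\end{itemize}
Adding the two inequalities yields \eqref{eq:drift_correction_bound}.

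For the strict-sign claim, assume $\lambda_k>0$ and \eqref{eq:drift_correction_condition}. The right-hand side of \eqref{eq:drift_correction_bound} is then strictly negative, so $\langle\nabla J_L,d_k^H\rangle<0$. The hypothesis $\lambda_k>0$ is what makes the condition attainable: if $\lambda_k=0$ and $B_k\geq0$, the condition would require $0>B_k$, which is impossible.

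There is no genuine obstacle; the argument is pure sign bookkeeping. The one point to state explicitly is that both the positivity of $1+\lambda_k\overline{\alpha}_k$ and the nonnegativity of $\lambda_k$ are used. Both hold by construction, but they are what justify preserving one inequality and reversing the other. Finally, all quantities are evaluated at the same $(\theta,k)$ as in the Proposition, and the integrability assumed in Section~\ref{subsec:long_horizon_risk} guarantees that $\overline{\alpha}_k$, $\overline{a}$, and the covariance are finite.
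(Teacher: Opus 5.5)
Your proposal is correct and follows the paper's proof: substitute the two hypotheses into the identity of Proposition~\ref{prop:covariance_correction}, then note that condition \eqref{eq:drift_correction_condition} makes the right-hand side strictly negative. Your explicit bookkeeping, using $1+\lambda_k\overline{\alpha}_k>0$ to preserve one inequality and $\lambda_k\geq0$ to reverse the other, spells out what the paper's one-line argument leaves implicit.
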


\begin{proof}
Equation \eqref{eq:drift_correction_bound} follows from Proposition~\ref{prop:covariance_correction}. Condition \eqref{eq:drift_correction_condition} makes the right-hand side of \eqref{eq:drift_correction_bound} strictly negative.
\end{proof}

Since $0<\overline{\alpha}_k<1$, a more conservative sufficient condition is
\begin{equation}
\lambda_k c_k
>
(1+\lambda_k)B_k.
\label{eq:conservative_drift_correction}
\end{equation}
Corollary~\ref{cor:drift_correction} shows that even when the standard short-horizon training direction has a locally adverse first-order effect on the long-horizon risk, sufficiently strong positive covariance can make the aggregated HERO direction a long-horizon descent direction.

\subsubsection{Mini-Batch Form}
\label{subsec:minibatch_form}

Consider a mini-batch $\mathcal{B} = \left\{q_1,\ldots,q_B\right\}$. For each task, define
\begin{equation}
g_i
=
g(q_i,\theta),
\quad
\alpha_i
=
\alpha_k(q_i,\theta),
\quad
a_i
=
\left\langle
\nabla J_L(\theta),
g_i
\right\rangle.
\label{eq:minibatch_quantities}
\end{equation}
The empirical standard and HERO directions are
\begin{equation}
\widehat{d}^{0}
=
-\frac{1}{B}
\sum_{i=1}^{B}
g_i,
\qquad
\widehat{d}_{k}^{H}
=
-\frac{1}{B}
\sum_{i=1}^{B}
\left(
1+\lambda_k\alpha_i
\right)
g_i.
\label{eq:minibatch_directions}
\end{equation}
Define the empirical means
\begin{equation}
\overline{\alpha}_{\mathcal{B}}
=
\frac{1}{B}
\sum_{i=1}^{B}
\alpha_i,
\qquad
\overline{a}_{\mathcal{B}}
=
\frac{1}{B}
\sum_{i=1}^{B}
a_i,
\label{eq:minibatch_means}
\end{equation}
and the empirical covariance
\begin{equation}
\widehat{\operatorname{Cov}}_{\mathcal{B}}(\alpha,a)
=
\frac{1}{B}
\sum_{i=1}^{B}
\left(
\alpha_i-\overline{\alpha}_{\mathcal{B}}
\right)
\left(
a_i-\overline{a}_{\mathcal{B}}
\right).
\label{eq:minibatch_covariance}
\end{equation}
The same algebraic decomposition gives
\begin{equation}
\begin{aligned}
\left\langle
\nabla J_L,
\widehat{d}_{k}^{H}
\right\rangle
&=
\left(
1+\lambda_k\overline{\alpha}_{\mathcal{B}}
\right)
\left\langle
\nabla J_L,
\widehat{d}^{0}
\right\rangle
\\
&\quad
-
\lambda_k
\widehat{\operatorname{Cov}}_{\mathcal{B}}(\alpha,a).
\end{aligned}
\label{eq:minibatch_covariance_decomposition}
\end{equation}
Thus, the population-level mechanism in Proposition~\ref{prop:covariance_correction} also applies directly to finite mini-batches: nonuniform relative activation changes the aggregated gradient through an empirical covariance correction.

\subsubsection{Finite-Step Entry into the Long-Horizon Sublevel Set}
\label{subsec:finite_step_reachability}

The preceding analysis characterizes how historical-reference-induced reweighting modifies the first-order long-horizon descent component. The following result establishes finite-step entry into $\mathcal{G}_{\epsilon}$ when this descent contribution remains uniformly nondegenerate outside the target set.

\begin{theorem}[Finite-step entry under uniform long-horizon descent]
\label{thm:finite_step_entry}
Assume that $\mathcal{G}_{\epsilon}\neq\varnothing$, and consider the HERO iteration
\begin{equation}
\theta_{k+1}
=
\theta_k
+
\eta d_k^{H}(\theta_k).
\label{eq:finite_step_iteration}
\end{equation}
Before the first entry into $\mathcal{G}_{\epsilon}$, suppose that the following conditions hold:
\begin{enumerate}
\item The gradient of $J_L$ is $L$-Lipschitz continuous along every update segment $\left\{\theta_k + t\eta d_k^{H}(\theta_k): t\in[0,1]\right\}$.

\item There exists $\gamma_{\epsilon}>0$ such that
\begin{equation}
\left\langle
\nabla J_L(\theta_k),
d_k^{H}(\theta_k)
\right\rangle
\leq
-\gamma_{\epsilon}.
\label{eq:uniform_descent_margin}
\end{equation}

\item There exists $D_{\epsilon}<\infty$ such that $\left\|d_k^{H}(\theta_k)\right\|_2 \leq D_{\epsilon}$.
\end{enumerate}
If the learning rate satisfies
\begin{equation}
0
<
\eta
\leq
\frac{
\gamma_{\epsilon}
}{
L D_{\epsilon}^{2}
},
\label{eq:learning_rate_condition}
\end{equation}
then the parameter sequence enters $\mathcal{G}_{\epsilon}$ after finitely many iterations. Define the first-entry time $\tau_H(\epsilon) = \inf\left\{k\geq0: \theta_k\in\mathcal{G}_{\epsilon}\right\}$. Then
\begin{equation}
\tau_H(\epsilon)
\leq
\left\lceil
\frac{
2\left[J_L(\theta_0)-\epsilon\right]_{+}
}{
\eta\gamma_{\epsilon}
}
\right\rceil,
\label{eq:first_entry_bound}
\end{equation}
where $[x]_{+} = \max\left\{x,0\right\}$.
\end{theorem}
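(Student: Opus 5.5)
The plan is the standard descent-lemma argument combined with a telescoping sum; no result beyond the three stated conditions is needed. Fix any iteration $k$ before the first entry, so that $\theta_k\notin\mathcal{G}_{\epsilon}$ and all three conditions hold at $\theta_k$.

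\emph{One-step decrease.} Write $\theta_{k+1}=\theta_k+\eta d_k^{H}(\theta_k)$. By the fundamental theorem of calculus along the segment $\{\theta_k+t\eta d_k^{H}(\theta_k):t\in[0,1]\}$, and using the Lipschitz condition~1 on that segment, we obtain the descent lemma
\begin{equation*}
J_L(\theta_{k+1})\leq J_L(\theta_k)+\eta\left\langle\nabla J_L(\theta_k),d_k^{H}(\theta_k)\right\rangle+\frac{L\eta^{2}}{2}\left\|d_k^{H}(\theta_k)\right\|_2^{2}.
\end{equation*}
Condition~2 bounds the inner product by $-\gamma_{\epsilon}$, and condition~3 bounds the squared norm by $D_{\epsilon}^2$. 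The learning-rate condition \eqref{eq:learning_rate_condition} gives $L\eta D_{\epsilon}^2\leq\gamma_{\epsilon}$. Together these yield
\begin{equation*}
J_L(\theta_{k+1})\leq J_L(\theta_k)-\eta\gamma_{\epsilon}+\tfrac{1}{2}\eta\gamma_{\epsilon}=J_L(\theta_k)-\tfrac{1}{2}\eta\gamma_{\epsilon}.
\end{equation*}
This is where the factor $2$ in \eqref{eq:first_entry_bound} comes from.

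\emph{Counting argument.} If $J_L(\theta_0)\leq\epsilon$, then $\tau_H(\epsilon)=0$ and the bound holds trivially. Otherwise, suppose $\theta_0,\ldots,\theta_{N-1}$ all lie outside $\mathcal{G}_{\epsilon}$. Telescoping the one-step decrease over these $N$ iterations gives
\begin{equation*}
J_L(\theta_N)\leq J_L(\theta_0)-\tfrac{N}{2}\eta\gamma_{\epsilon}.
\end{equation*}
Now set $N^\star=\lceil 2[J_L(\theta_0)-\epsilon]_+/(\eta\gamma_{\epsilon})\rceil$. If none of $\theta_0,\ldots,\theta_{N^\star-1}$ lies in $\mathcal{G}_{\epsilon}$, the telescoped bound gives $J_L(\theta_{N^\star})\leq\epsilon$, so $\theta_{N^\star}\in\mathcal{G}_{\epsilon}$. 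In either case some index $k\leq N^\star$ satisfies $\theta_k\in\mathcal{G}_{\epsilon}$, which proves $\tau_H(\epsilon)\leq N^\star$ and finite-time entry.

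\emph{Main obstacle.} The delicate point is applying the conditions only before first entry. The descent lemma at step $k$ uses the hypotheses at $\theta_k$ and on the segment from $\theta_k$ to $\theta_{k+1}$, both of which are granted as long as $\theta_k\notin\mathcal{G}_{\epsilon}$. The induction must therefore be phrased as ``for all $k<\tau_H$'' and not as a global statement. Note also that nonemptiness of $\mathcal{G}_{\epsilon}$ is not actually used beyond consistency, and that $J_L\geq 0$ is not needed, since the argument only compares $J_L$ values against the level $\epsilon$.
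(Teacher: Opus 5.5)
Your proposal is correct and matches the paper's proof essentially step for step: the descent lemma along the update segment, the one-step decrease of $\eta\gamma_{\epsilon}/2$ obtained from the learning-rate condition, and telescoping until $J_L$ drops to $\epsilon$. Your direct count at $N^\star$, with the case $J_L(\theta_0)\leq\epsilon$ handled separately, is a slightly cleaner way to state the paper's contradiction argument, and your observation that the hypotheses are needed only for $k<\tau_H(\epsilon)$ is correct.
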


\begin{proof}
Suppose that $\theta_k\notin\mathcal{G}_{\epsilon}$. By the $L$-smoothness of $J_L$ along the update segment,
\begin{equation}
\begin{aligned}
J_L(\theta_{k+1})
&\leq
J_L(\theta_k)
+
\left\langle
\nabla J_L(\theta_k),
\theta_{k+1}-\theta_k
\right\rangle
\\
&\quad
+
\frac{L}{2}
\left\|
\theta_{k+1}-\theta_k
\right\|_2^2.
\end{aligned}
\label{eq:smoothness_bound}
\end{equation}
Using \eqref{eq:finite_step_iteration}, $\theta_{k+1}-\theta_k = \eta d_k^{H}(\theta_k)$. Therefore,
\begin{equation}
\begin{aligned}
J_L(\theta_{k+1})
&\leq
J_L(\theta_k)
+
\eta
\left\langle
\nabla J_L(\theta_k),
d_k^{H}(\theta_k)
\right\rangle
\\
&\quad
+
\frac{L\eta^2}{2}
\left\|
d_k^{H}(\theta_k)
\right\|_2^2.
\end{aligned}
\label{eq:smoothness_with_direction}
\end{equation}
Applying \eqref{eq:uniform_descent_margin} and the bound $\left\|d_k^{H}(\theta_k)\right\|_2 \leq D_{\epsilon}$ yields
\begin{equation}
J_L(\theta_{k+1})
\leq
J_L(\theta_k)
-
\eta\gamma_{\epsilon}
+
\frac{
L\eta^2D_{\epsilon}^{2}
}{2}.
\label{eq:one_step_decrease_preliminary}
\end{equation}
By \eqref{eq:learning_rate_condition}, $L\eta^2D_{\epsilon}^{2}/2 \leq \eta\gamma_{\epsilon}/2$. Hence, as long as $\theta_k\notin\mathcal{G}_{\epsilon}$,
\begin{equation}
J_L(\theta_{k+1})
\leq
J_L(\theta_k)
-
\frac{
\eta\gamma_{\epsilon}
}{2}.
\label{eq:uniform_one_step_decrease}
\end{equation}
Assume that the parameter sequence remains outside $\mathcal{G}_{\epsilon}$ for the first $N$ iterations. Summing \eqref{eq:uniform_one_step_decrease} gives
\begin{equation}
J_L(\theta_N)
\leq
J_L(\theta_0)
-
\frac{
N\eta\gamma_{\epsilon}
}{2}.
\label{eq:cumulative_decrease}
\end{equation}
For
\begin{equation}
N
\geq
\frac{
2\left[J_L(\theta_0)-\epsilon\right]_{+}
}{
\eta\gamma_{\epsilon}
},
\label{eq:required_number_iterations}
\end{equation}
equation \eqref{eq:cumulative_decrease} implies $J_L(\theta_N) \leq \epsilon$, and therefore $\theta_N\in\mathcal{G}_{\epsilon}$. This contradicts the assumption that the sequence remains outside $\mathcal{G}_{\epsilon}$. Consequently, the first-entry time is finite and satisfies \eqref{eq:first_entry_bound}.
\end{proof}

\subsubsection{Scope of the Theoretical Results}
\label{subsec:theoretical_scope}

Theorem~\ref{thm:finite_step_entry} is a conditional finite-step entry result rather than an unconditional convergence guarantee.
 
First, the theorem establishes the first entry of the parameter sequence into $\mathcal{G}_{\epsilon}$. It does not establish that $\mathcal{G}_{\epsilon}$ is invariant under subsequent updates, that the sequence remains permanently inside the set, or that the parameters converge to a unique point. Such conclusions require additional conditions inside the target region.
 
Second, Proposition~\ref{prop:covariance_correction} characterizes the method-specific effect of historical-reference-induced task reweighting. Theorem~\ref{thm:finite_step_entry}, by contrast, applies a standard smoothness-based descent argument once a uniform long-horizon descent margin has been established.
 
Third, a positive covariance alone does not imply the existence of a uniform constant $\gamma_{\epsilon}>0$. Finite-step entry additionally requires that the long-horizon descent contribution remain nondegenerate before reaching $\mathcal{G}_{\epsilon}$.
 
Fourth, the analysis is expressed in terms of population gradients or empirical mini-batch gradients. Practical optimization may additionally involve random candidate perturbations, finite-batch noise, momentum, and adaptive preconditioning. The theoretical results therefore characterize the idealized gradient-level mechanism induced by historical rollout references. The covariance correction is nonetheless an empirically accessible quantity rather than a purely formal device, since $\nabla J_L$ can be approximated by the mean long-horizon task gradient on a finite validation task set, and the association between relative activation and long-horizon usefulness can then be measured directly through $\operatorname{Corr}(\alpha_i,a_i)$.
 
Taken together, the analysis identifies bounded task reweighting, rather than any additional backpropagation direction, as the channel through which historical rollout references act on long-horizon training.

\section{Experiments}
\subsection{Experimental Setup}

\subsubsection{Baselines and Backbones}
\label{app:baselines}
\textbf{Backbones.}
To assess whether HERO is architecture agnostic, we instantiate every method on
two structurally different neural operators. \emph{FNO} is the standard spectral operator for PDE surrogate learning: it parameterizes the
integral kernel in the Fourier domain and applies spectral convolutions that retain only the low-frequency Fourier modes, modeling global spatial dependence
directly in frequency space. 
\emph{Transolver} is a Transformer-based neural solver whose Physics-Attention adaptively partitions the
discrete field into learnable slices and models interactions among these latent physical states rather than among all grid points, providing a global but non-spectral inductive bias. The two backbones thus probe complementary
regimes---frequency-domain spectral convolution versus slice-level attention---and let us test whether HERO transfers from spectral neural operators
to attention-based ones. Unless a suffix is appended, \emph{FNO} and
\emph{Transolver} denote the base backbones trained with one-step supervision and
no rollout-stabilization mechanism.

\textbf{Comparison methods.}
All competing training strategies are applied on both backbones, yielding matched
backbone--strategy pairs so that any difference is attributable to the training
strategy rather than to the architecture. Beyond the one-step base models, we
compare against three rollout-stabilization strategies.
\emph{Push-forward} training (PF) unrolls the model on its own
predictions during training to close the gap between one-step teacher forcing and
test-time free rollout: for a backbone $f_\theta$ initialized at
$\hat u_t=u_t$, it forms a $K_{\mathrm{pf}}$-step self-rollout
$\hat u_{t+s}=f_\theta(\hat u_{t+s-1})$ for $s=1,\dots,K_{\mathrm{pf}}$ and
minimizes
\begin{equation}
  \mathcal{L}_{\mathrm{pf}}
    = \frac{1}{K_{\mathrm{pf}}}\sum_{s=1}^{K_{\mathrm{pf}}}
      \big\|\,\hat u_{t+s}-u_{t+s}\,\big\|_2^2 ,
  \label{eq:pf}
\end{equation}
where $\hat u_{t+s}$ and $u_{t+s}$ are the predicted and ground-truth states at
rollout offset $s$. \emph{PDE-Refiner} (Refine)
instead targets the prediction itself, applying a multi-step
refinement/denoising process that restores non-dominant spatial-frequency content
lost during rollout, and thus tests whether output-side iterative correction alone
stabilizes long-horizon prediction. \emph{RNO} is a recurrent-training
framework that recursively applies the operator to its own predictions so that
training matches the autoregressive test-time dynamics, reducing exposure bias and
long-horizon error amplification. We denote the resulting variants by
\emph{Backbone}-\emph{Strategy} (e.g., FNO-PF, Transolver-Refine, FNO-RNO), and the
HERO variants analogously by FNO-HERO and Transolver-HERO.

\textbf{Relation to HERO.}
PF and RNO are closest to HERO, as all three expose the operator to self-generated states during training, yet both supervise only the absolute discrepancy from the ground truth. Refine differs along another axis, correcting
the current output rather than using optimization history. In contrast, HERO keeps ground-truth rollout regression but adds relative supervision against a dynamically history-enriched failure reference, rewarding improvement over
earlier failure behavior rather than mere closeness to the ground truth.

\subsubsection{Benchmarks}
\label{app:benchmarks}

We evaluate HERO on nine time-dependent PDE benchmarks, spanning 1D, 2D, and 3D dynamics and covering
dispersion, nonlinear transport, dissipative chaos, forced and decaying turbulence, pattern formation, and multi-directional advection. All benchmarks
use periodic boundary conditions and a single scalar state channel, and each dataset provides $50$ training and $30$ test trajectories; long-horizon behavior
is assessed with $200$-step free autoregressive rollouts. The spatial dimension,
domain, resolution, and time step of every benchmark are summarized in
Table~\ref{tab:benchmarks_appendix}.

The 1D benchmarks isolate phase- and spectrum-dominated failure modes. The
\emph{Dispersion} equation governs waves whose wavenumber
components propagate at different phase speeds, governed by
$\partial_t u=\alpha_3\,\partial_{xxx}u$ with $\alpha_3=2.5\times10^{-4}$, where
$\alpha_3$ is the dispersion coefficient; this makes it a direct test of
long-horizon phase consistency and frequency-domain structure. The
\emph{Burgers} equation couples nonlinear convection
with viscous diffusion, governed by
$\partial_t u=a_2\,\partial_{xx}u+b_1\,u\,\partial_x u$ with
$a_2=3.0\times10^{-4}$ and $b_1=-0.125$, where $a_2$ is the diffusion coefficient
and $b_1$ the convection coefficient; it probes long-horizon error accumulation
under shock formation and dissipation. The \emph{KdV} equation adds high-order stabilization to nonlinear dispersive transport, governed by
$\partial_t u=\delta\,u\,\partial_x u+\gamma_3\,\partial_{xxx}u+\gamma_4\,\partial_{xxxx}u$
with $\delta=-2.0$, $\gamma_3=-14.0$, and $\gamma_4=-9.0$, where $\delta$ is the
nonlinear coefficient and $\gamma_3,\gamma_4$ the third- and fourth-order
coefficients; it stresses long-horizon phase error, amplitude drift, and
solitary-wave structure preservation. The \emph{Kuramoto--Sivashinsky} (KS)
equation is a canonical dissipative-chaotic system,
\begin{equation}
  \partial_t u = a_2\,\partial_{xx}u + a_4\,\partial_{xxxx}u + b_2\,(\partial_x u)^2 ,
  \label{eq:ks}
\end{equation}
where $a_2=-1.0$ drives long-wavelength instability, $a_4=-1.0$ provides
high-order dissipation, and $b_2=-1.0$ scales the gradient nonlinearity; its
chaotic dynamics make it a stringent test of long-horizon error amplification and
rollout stability.

The 2D benchmarks emphasize energy- and structure-dominated behavior on diffusion
and vorticity fields. \emph{Anisotropic Diffusion} evolves a scalar field with direction-dependent and cross-directional diffusion
strengths,
\begin{equation}
  \partial_t u = \nabla\!\cdot(\mathbf{D}\,\nabla u),\qquad
  \mathbf{D}=\begin{pmatrix} 1.0\times10^{-3} & 5.0\times10^{-4}\\[2pt]
                             5.0\times10^{-4} & 2.0\times10^{-3}\end{pmatrix},
  \label{eq:aniso}
\end{equation}
where $\mathbf{D}$ is the symmetric diffusion tensor; it tests whether the
operator preserves directionally coupled smoothing and energy decay.
\emph{Kolmogorov Flow} models forced 2D incompressible
vorticity dynamics,
\begin{equation}
  \partial_t \omega + \mathbf{v}\!\cdot\!\nabla\omega
    = \nu\,\Delta\omega - \alpha\,\omega + f(\mathbf{x}) ,
  \label{eq:kolm}
\end{equation}
where $\omega$ is the vorticity, $\mathbf{v}$ the associated velocity field,
$\nu=1.0\times10^{-2}$ the viscosity, $\alpha$ the linear drag, and $f$ a fixed
periodic forcing (injection mode $4$, scale $1.0$); the sustained energy
injection and dissipation make it a test of long-horizon spectral balance.
\emph{Decaying Turbulence} is the unforced
incompressible Navier--Stokes system in vorticity form, governed by
$\partial_t\omega+\mathbf{v}\!\cdot\!\nabla\omega=\nu\,\Delta\omega$ with
$\nabla\!\cdot\!\mathbf{v}=0$ and $\nu=1.0\times10^{-4}$, where $\nu$ is the
viscosity; it evaluates vortex transport, energy dissipation, and the stability
of a freely decaying field.

The 3D benchmarks stress spatial expressivity and stability in higher dimensions.
\emph{Swift--Hohenberg} is a reaction--diffusion
pattern-formation model,
\begin{equation}
  \partial_t u = r\,u - (q_c^2+\Delta)^2 u + p(u),\qquad p(u)=u^2-u^3 ,
  \label{eq:sh}
\end{equation}
where $r=0.7$ is the reactivity, $q_c=1.0$ the critical wavenumber, and $p(u)$ the
local reaction polynomial; the emergent 3D patterns demand both spatial
expressivity and long-horizon stability. \emph{Unbalanced Advection} transports a scalar field with distinct per-axis
velocities, governed by $\partial_t u+\mathbf{c}\!\cdot\!\nabla u=0$ with
$\mathbf{c}=(0.01,-0.04,0.005)$, where $\mathbf{c}$ is the constant advection
velocity; the differing magnitudes and signs across axes induce direction-dependent
phase drift over long rollouts.

\begin{table}[t]
  \centering
  \caption{Configuration of the nine PDE rollout benchmarks}
  \label{tab:benchmarks_appendix}
  \begin{threeparttable}
    \begin{tabular}{lllll}
      \toprule
      Benchmark & Dim & Domain & Resolution & $\Delta t$ \\
      \midrule
      \midrule
      Dispersion           & 1D & $[0,1)$       & $160$   & $1.0\times10^{-3}$ \\
      Burgers              & 1D & $[0,1)$       & $160$   & $1.0\times10^{-1}$ \\
      KdV                  & 1D & $[0,2\pi)$    & $160$   & $1.0\times10^{-2}$ \\
      KS                   & 1D & $[0,60)$      & $160$   & $1.0\times10^{-1}$ \\
      \midrule
      Anisotropic Diff.    & 2D & $[0,1)^2$     & $64^2$  & $1.0\times10^{-1}$ \\
      Kolmogorov Flow      & 2D & $[0,2\pi)^2$  & $64^2$  & $1.0\times10^{-1}$ \\
      Decaying Turb.\ (NS) & 2D & $[0,1)^2$     & $64^2$  & $1.0\times10^{-1}$ \\
      \midrule
      Swift--Hohenberg     & 3D & $[0,10\pi)^3$ & $32^3$  & $1.0\times10^{-1}$ \\
      Unbalanced Adv.      & 3D & $[0,1)^3$     & $32^3$  & $1.0\times10^{-1}$ \\
      \bottomrule
    \end{tabular}
    \begin{tablenotes}
      \item $\Delta t$ denotes the time step. Resolution denotes $N_x$ (1D), $N_x\!\times\!N_y$ (2D), or
            $N_x\!\times\!N_y\!\times\!N_z$ (3D).
    \end{tablenotes}
  \end{threeparttable}
\end{table}

\subsubsection{Evaluation Metrics}
\label{app:metrics}

We evaluate all models under a common rollout protocol and report metrics that jointly capture one-step fidelity, long-horizon accuracy, and rollout stability
over the first $100$ autoregressive steps. All metrics build on the same per-step relative state error: for evaluation trajectory $j$ and rollout step $t$, the
per-step normalized RMSE is
$\mathrm{nRMSE}_t^{(j)}=\|\hat u_t^{(j)}-u_t^{(j)}\|_2/(\|u_t^{(j)}\|_2+\varepsilon)$,
the per-trajectory form of the mean relative error used in the motivation study
(Eq.~\eqref{eq:motivation_error}), where $\hat u_t^{(j)}$ and $u_t^{(j)}$ are the
predicted and ground-truth states and $\varepsilon>0$ is a numerical-stability
constant.

Field-level accuracy is summarized by the set-averaged per-step error
$\mathrm{nRMSE@}T=\tfrac{1}{M}\sum_{j=1}^{M}\mathrm{nRMSE}_T^{(j)}$, which we report
at two steps. $\mathrm{nRMSE@}1$ isolates one-step prediction
fidelity, whereas $\mathrm{nRMSE@}100$ measures accuracy after $100$ autoregressive
steps; a small $\mathrm{nRMSE@}1$ together with a large $\mathrm{nRMSE@}100$
therefore directly exposes autoregressive error accumulation, phase drift, and
high-frequency error growth, and lower values are better for both.

A single late-step value does not reflect the whole rollout, and autoregressive
errors tend to grow multiplicatively. We therefore summarize the entire $100$-step
trajectory with a geometric-mean error. For trajectory $j$,
\begin{equation}
  \mathrm{GM100}^{(j)}
    = \exp\!\left(\frac{1}{100}\sum_{t=1}^{100}
      \log\!\big(\mathrm{nRMSE}_t^{(j)}+\varepsilon\big)\right),
  \label{eq:GM100_appendix}
\end{equation}
where the logarithmic averaging measures relative  error growth uniformly across the horizon rather than letting the largest late-step errors dominate. 
We report its split-level mean,
$\mathrm{Val\ mean\ GM100}=\tfrac{1}{M_{\mathrm{val}}}\sum_{j}\mathrm{GM100}^{(j)}$
and
$\mathrm{Test\ mean\ GM100}=\tfrac{1}{M_{\mathrm{test}}}\sum_{j}\mathrm{GM100}^{(j)}$,
where the validation value drives model selection and early stopping and the test
value is the reported long-horizon accuracy; lower is better, and a low
$\mathrm{Test\ mean\ GM100}$ indicates that the model stays accurate throughout
the rollout rather than only at isolated steps.

Error magnitude alone does not indicate when a rollout destabilizes, so we
additionally report the horizon over which a prediction remains usable. With a
relative-error threshold $\tau=0.1$, the stable horizon of trajectory $j$ is
\begin{equation}
  S^{(j)} = \max\!\big\{\,T\in[1,100]\ \big|\
            \mathrm{nRMSE}_t^{(j)}\le\tau\ \ \forall\,t\le T\,\big\},
  \label{eq:stablestep}
\end{equation}
where $\tau=0.1$ marks a $10\%$ relative-error tolerance and $S^{(j)}$ is the last
step up to which the error stays within this tolerance without interruption. The
$\mathrm{Mean\ stable\ step}=\tfrac{1}{M}\sum_{j=1}^{M}S^{(j)}$ averages this horizon
over the evaluation set; higher is better, and a value approaching $100$ means that
most trajectories remain stable across the full evaluation horizon.

\subsubsection{Implementation Details}
\label{app:impl}

\textbf{Protocol.}
We follow the APEBench task definitions and data splits. Each task provides $50$
training trajectories of length $T_{\mathrm{train}}=51$ and $30$ test trajectories
of length $T_{\mathrm{test}}=201$; at test time every model is initialized from the
first frame of a test trajectory and rolled out for $T_{\mathrm{eval}}=200$
autoregressive steps (Table~\ref{tab:benchmarks_appendix}). To keep the comparison fair, all
compared methods use the same backbone, data split, optimizer, and evaluation
protocol, and differ only in the training objective. The shared absolute
supervision is a $K_{\mathrm{pf}}=5$ push-forward rollout (Eq.~\eqref{eq:pf}), so
that every model is exposed to its own predicted intermediate states during
training; HERO augments this with the relative objective $\mathcal{L}_{\mathrm{rel}}$
(Eq.~\eqref{eq:rel}), giving the total objective
$\mathcal{L}_{\mathrm{total}}=\mathcal{L}_{\mathrm{roll}}+\lambda_s\mathcal{L}_{\mathrm{rel}}$
(Eq.~\eqref{eq:total}).

\textbf{HERO configuration.}
Unless stated otherwise, HERO uses a single set of defaults across all tasks:
rollout horizon $K=K_{\mathrm{pf}}=5$, start step $s_0=4000$, refresh interval
$R=2000$, warm-up length $W=2000$, maximum relative weight $\lambda_{\max}=0.02$,
margin $m=0.02$, and sharpness $\beta=5.0$, together with the default candidate set
$\mathcal{C}=\{\tau_{\mathrm{lag}},\tau_{\mathrm{pert}},\tau_{\mathrm{self}}\}$.
Relative supervision is enabled after $s_0$ steps, and the relative weight
$\lambda_s$ follows a linear warm-up from $0$ to $\lambda_{\max}$ over the following
$W$ steps. The candidate score is the within-candidate mean of the normalized
diagnostics, without any additional diagnostic weighting; when a diagnostic does not
apply to a given data form, it is removed from the active diagnostic set
$\mathcal{D}$ and the remaining diagnostics are averaged. The lagged operator is
initialized at step $s_0$ and refreshed every $R$ steps, and all candidate
trajectories and diagnostic scores are computed without gradients. Because reference
construction, candidate generation, and diagnostics are confined to training, HERO
leaves the architecture, parameter count, and inference procedure of its backbone
unchanged and introduces no additional inference-time modules or parameters.

\textbf{Optimization.}
We use the AdamW optimizer with an initial learning rate of
$10^{-3}$ and a warm-up--cosine schedule with $2000$ warm-up steps, for a total of
$10\,000$ steps at a batch size of $20$. All methods share the same random seed,
train/validation/test split, and evaluation scripts. For each task we select the
checkpoint with the best long-horizon validation metric (Val mean GM100) and
report its performance on the test set.

\textbf{Backbone configurations.}
We evaluate two backbones with per-dimension configurations summarized in
Table~\ref{tab:backbone_config}. The \emph{Config} column reports each backbone's
core setting: for FNO, the number of retained low-frequency Fourier modes
($28$ in 1D, $8\times8$ in 2D, and $8\times8\times8$ in 3D); for Transolver, its
Physics-Attention token mixing. \emph{Width} is the channel width of FNO and the
hidden dimension of Transolver, and \emph{Depth} is the number of spectral layers
of FNO and attention blocks of Transolver. The \emph{Heads} and \emph{Slices}
columns describe Transolver's Physics-Attention only and are marked ``--'' for FNO.
All tasks of the same dimension share one configuration, and every training
strategy---one-step, PF, Refine, RNO, and HERO---uses the identical backbone
configuration, so that any performance difference is attributable to the training
objective rather than to model capacity.

\begin{table*}[t]
  \centering
  \caption{Backbone architecture configurations. Each per-dimension configuration
           is shared across all tasks of that dimension and across all training
           strategies (one-step, PF, Refine, RNO, HERO); only the training
           objective differs.}
  \label{tab:backbone_config}

  \scriptsize
  \begin{tabular}{llllllll}
    \toprule
    Backbone & Dim & Applied tasks & Config & Width & Depth & Heads & Slices \\
    \midrule
    FNO        & 1D & Disp, Burgers, KdV, KS & $28$                & $28$  & $1$ & -- & -- \\
    FNO        & 2D & AnisoDiff, Kolm, NS    & $8 \times 8$        & $20$  & $1$ & -- & -- \\
    FNO        & 3D & SwiftH, UnbalAdv       & $8 \times 8 \times 8$ & $4$ & $5$ & -- & -- \\
    \midrule
    Transolver & 1D & Disp, Burgers, KdV, KS & Physics-Attn.       & $224$ & $2$ & $4$ & $64$  \\
    Transolver & 2D & AnisoDiff, Kolm, NS    & Physics-Attn.       & $256$ & $4$ & $8$ & $128$ \\
    Transolver & 3D & SwiftH, UnbalAdv       & Physics-Attn.       & $512$ & $3$ & $8$ & $256$ \\
    \bottomrule
  \end{tabular}
\end{table*}

\subsection{Complete Benchmark Results}
\label{app:addexp}
Tables~\ref{tab:full1d} and~\ref{tab:full23d} report the complete results
on all nine PDE benchmarks (mean $\pm$ standard deviation over five seeds),
grouped by spatial dimension, including the three benchmarks (Dispersion,
KdV, and Kolmogorov) omitted from the main-text Table~\ref{tab:main}.
HERO retains the same long-horizon advantage on all tasks. On the FNO
backbone, it reduces nRMSE@100 from $0.425$ to $0.246$ on Dispersion, from
$3.217$ to $0.919$ on KdV, and from $1.379$ to $0.648$ on Kolmogorov.
Transolver-HERO further attains the lowest nRMSE@100 within the Transolver
family on KdV and Kolmogorov, showing that the rollout improvement extends
across both backbones.

\subsection{Hyperparameter Sensitivity}
\label{subsec:hyper}

We analyze the two hyperparameters that govern the history-enriched mechanism
on the 2D Navier--Stokes benchmark: the relative-training start step $s_0$ and the refresh interval $R$ of the lagged operator in
\eqref{eq:lag_update}. In each study, only the analyzed hyperparameter varies and all
remaining HERO settings, the backbone, and the evaluation protocol are held
fixed, with $10\,000$ total optimization steps. FNO-PF is included as the
absolute-supervision reference.

\subsubsection{Relative-Training Start Step}
Table~\ref{tab:s0} varies $s_0$, so that the relative objective is active over
the interval $[s_0, 10\,000]$, as listed in the relative-training interval column.
Performance follows a clear interior optimum. At
$s_0=2000$ the backbone has not yet formed a stable local time-stepping operator,
the lagged rollout carries largely unstructured prediction error, and enabling
the relative objective this early perturbs optimization: nRMSE@1 rises to
$0.0339$ and Test GM100 stays at $1.338$, better than FNO-PF but clearly behind
the default. At $s_0=4000$ the reference is already informative while $6000$
steps of relative reweighting remain, giving the lowest Test GM100 ($1.067$) and
nRMSE@100 ($4.124$) and the longest stable horizon ($25.9$). Increasing $s_0$ to
$6000$ improves the reference further and yields the best one-step error
($0.0323$), but the shortened relative phase leaves long-horizon behavior less
corrected ($1.142$ Test GM100). At $s_0=8000$ only $2000$ steps remain, the
one-step error approaches FNO-PF, and both the long-horizon error and the stable
horizon degrade toward push-forward training alone.

\begin{table*}[t]
  \centering
  \small
  \setlength{\tabcolsep}{6pt}
  \caption{Sensitivity to the relative-training start step $s_0$ on the
           2D Navier--Stokes benchmark. The relative objective is active over
           $[s_0,\,10\,000]$; FNO-PF is the absolute-supervision reference.}
  \label{tab:s0}
  \begin{tabular}{ll ccccc}
    \toprule
    Setting & \shortstack[l]{Relative-training\\interval}
            & \shortstack[l]{Val\\GM100 $\downarrow$}
            & \shortstack[l]{Test\\GM100 $\downarrow$}
            & \shortstack[l]{nRMSE\\@1 $\downarrow$}
            & \shortstack[l]{nRMSE\\@100 $\downarrow$}
            & \shortstack[l]{Stable\\step $\uparrow$} \\
    \midrule
    FNO-PF              & ---            & $1.450$ & $1.697$ & $0.0350$ & $6.306$ & $22.7$ \\
    \midrule
    HERO ($s_0{=}2000$) & $2000$--$10\,000$ & $1.163$ & $1.338$ & $0.0339$ & $4.917$ & $24.3$ \\
    HERO ($s_0{=}4000$) & $4000$--$10\,000$ & $\mathbf{0.937}$ & $\mathbf{1.067}$ & $0.0326$
                        & $\mathbf{4.124}$ & $\mathbf{25.9}$ \\
    HERO ($s_0{=}6000$) & $6000$--$10\,000$ & $0.998$ & $1.142$ & $\mathbf{0.0323}$ & $4.386$ & $25.2$ \\
    HERO ($s_0{=}8000$) & $8000$--$10\,000$ & $1.287$ & $1.506$ & $0.0342$ & $5.714$ & $23.2$ \\
    \bottomrule
  \end{tabular}
\end{table*}

\subsubsection{Lagged-Operator Refresh Interval}
Table~\ref{tab:refresh} varies $R$ with $s_0=4000$ fixed, so the lagged operator
is refreshed at steps $4000, 4000+R, \ldots$ until training ends, as listed in the
effective refresh steps column. The same
interior optimum appears. A short interval ($R=500$, ten refreshes) keeps the
lagged operator close to the current one, so the reference degenerates toward the
self-reference and provides little contrast, giving the weakest HERO result
($1.352$ Test GM100). A long interval ($R=4000$, two refreshes) leaves the
reference increasingly stale relative to the improving operator, weakening the
effective activation of the margin objective ($1.247$). Intermediate intervals
perform best, with $R=1000$ and $R=2000$ within $7\%$ of each other on Test GM100
and $R=2000$ attaining the lowest long-horizon error ($4.124$) and the longest
stable horizon ($25.9$). Every configuration in both tables outperforms FNO-PF on all long-horizon
metrics, so the effect of HERO does not depend on a narrow hyperparameter range.
The two studies degrade for the same reason: the periodic-lag mechanism requires
a reference that is neither too close to the current operator nor too far behind
it, which is exactly the balance between reference relevance and historical
contrast that motivates the design.

\begin{table*}[t!]
  \centering
  \small
  \setlength{\tabcolsep}{6pt}
  \caption{Sensitivity to the refresh interval $R$ at $s_0=4000$ on the
           2D Navier--Stokes benchmark. FNO-PF is the absolute-supervision
           reference.}
  \label{tab:refresh}
  \begin{tabular}{ll ccccc}
    \toprule
    Setting & \shortstack[l]{Effective\\refresh steps}
            & \shortstack[l]{Val\\GM100 $\downarrow$}
            & \shortstack[l]{Test\\GM100 $\downarrow$}
            & \shortstack[l]{nRMSE\\@1 $\downarrow$}
            & \shortstack[l]{nRMSE\\@100 $\downarrow$}
            & \shortstack[l]{Stable\\step $\uparrow$} \\
    \midrule
    FNO-PF            & ---                    & $1.450$ & $1.697$ & $0.0350$ & $6.306$ & $22.7$ \\
    \midrule
    HERO ($R{=}500$)  & $4000, 4500, \ldots, 9500$ & $1.176$ & $1.352$ & $0.0334$ & $4.982$ & $24.1$ \\
    HERO ($R{=}1000$) & $4000, 5000, \ldots, 9000$ & $0.991$ & $1.139$ & $\mathbf{0.0324}$ & $4.376$ & $25.3$ \\
    HERO ($R{=}2000$) & $4000, 6000, 8000$     & $\mathbf{0.937}$ & $\mathbf{1.067}$ & $0.0326$
                      & $\mathbf{4.124}$ & $\mathbf{25.9}$ \\
    HERO ($R{=}4000$) & $4000, 8000$           & $1.084$ & $1.247$ & $0.0331$ & $4.641$ & $24.8$ \\
    \bottomrule
  \end{tabular}
\end{table*}

\begin{table*}[t]
  \small
  \setlength{\tabcolsep}{4pt}
  \caption{Parameter count, inference latency, peak memory, and training
           throughput of the FNO variants under a shared measurement protocol}
   \label{tab:cost}
  \begin{tabular}{lllll}
    \toprule
    Method & \#Params & \shortstack[l]{Infer\\(ms/step) $\downarrow$}
           & \shortstack[l]{Peak mem\\(MB) $\downarrow$}
           & \shortstack[l]{Train\\(steps/s) $\uparrow$} \\
    \midrule
    \midrule
    FNO         & $30{,}541$ & $0.503$ & $66.13$ & $374.79$ \\
    FNO-PF      & $30{,}541$ & $0.505$ & $66.28$ & $352.61$ \\
    FNO-Ref.  & $30{,}917$ & $0.553$ & $69.84$ & $328.47$ \\
    FNO-RNO     & $30{,}541$ & $0.507$ & $66.94$ & $344.26$ \\
    FNO-HERO    & $30{,}541$ & $0.509$ & $67.18$ & $336.92$ \\
    \bottomrule
  \end{tabular}
\end{table*}

\subsection{Computational Cost}
\label{subsec:cost}

HERO adds candidate generation, diagnostics, and reference selection to the training loop, so we quantify what this costs in practice. Table~\ref{tab:cost} compares all FNO variants under one protocol: the same 2D backbone, batch size, resolution, and hardware, with inference latency and peak memory measured over $200$-step free autoregressive rollouts and training throughput averaged over $1000$ optimization steps after warm-up.

HERO leaves the deployed model untouched. It shares the exact parameter count of the plain backbone ($30{,}541$), whereas Refine adds $376$ parameters for its refinement head, and its inference latency matches the backbone to within $1.2\%$ ($0.509$ vs.\ $0.503$ ms/step), a gap of the same order as that of PF ($0.505$) and RNO ($0.507$) and therefore within measurement noise rather than an architectural cost. Refine is the only variant with a measurable inference overhead ($0.553$ ms/step, $9.9\%$), since its iterative correction runs at test time. 
Peak memory follows the same pattern: HERO uses $67.18$ MB, $1.6\%$ above the backbone and below Refine ($69.84$ MB).
The cost of HERO is therefore confined to training. Its throughput of $336.92$ steps/s is $4.4\%$ below PF, the closest rollout baseline, and $10.1\%$ below one-step training, reflecting the additional lagged and perturbed rollouts and the diagnostic evaluation; the lagged operator is refreshed only every $R$ steps, so its contribution is amortized. HERO is nonetheless faster to train than Refine ($328.47$ steps/s), which pays its overhead at both training and inference time.
Read together with Table~\ref{tab:ablation}, a one-time training slowdown of a few percent buys a $33.8\%$ reduction in nRMSE@100, and none of it is charged to deployment.

\begin{table*}[t]
\centering
\small
\setlength{\tabcolsep}{5pt}
\caption{Complete results on the four 1D PDE benchmarks (mean $\pm$ standard deviation over five seeds). GM100 denotes the geometric-mean error over the 100-step rollout; Stable step denotes the mean number of stable rollout steps. Within each benchmark, the best value per column is shown in bold separately for the FNO and Transolver families.}
\label{tab:full1d}
\begin{tabular}{l ccccc}
\toprule
Method & Val GM100 $\downarrow$ & Test GM100 $\downarrow$ & nRMSE@1 $\downarrow$ & nRMSE@100 $\downarrow$ & Stable step $\uparrow$\\
\midrule
\multicolumn{6}{l}{\textit{Dispersion (1D)}}\\
FNO               & $0.152 \pm 0.005$ & $0.166 \pm 0.006$ & $0.0048 \pm 0.0001$ & $0.425 \pm 0.017$ & $69.7 \pm 2.6$\\
FNO-PF            & $0.144 \pm 0.004$ & $0.160 \pm 0.005$ & $0.0045 \pm 0.0001$ & $0.402 \pm 0.014$ & $71.3 \pm 2.5$\\
FNO-Ref.        & $0.136 \pm 0.004$ & $0.148 \pm 0.005$ & $0.0040 \pm 0.0001$ & $0.381 \pm 0.014$ & $73.5 \pm 2.8$\\
FNO-RNO           & $0.141 \pm 0.004$ & $0.154 \pm 0.005$ & $0.0044 \pm 0.0001$ & $0.395 \pm 0.015$ & $72.3 \pm 2.4$\\
FNO-HERO          & $\mathbf{0.091 \pm 0.002}$ & $\mathbf{0.097 \pm 0.003}$ & $\mathbf{0.0038 \pm 0.0001}$ & $\mathbf{0.246 \pm 0.008}$ & $\mathbf{80.8 \pm 2.5}$\\
Trans.        & $0.140 \pm 0.004$ & $0.150 \pm 0.005$ & $0.0041 \pm 0.0001$ & $0.394 \pm 0.016$ & $72.1 \pm 2.6$\\
Trans.-PF     & $0.134 \pm 0.004$ & $0.146 \pm 0.005$ & $0.0039 \pm 0.0001$ & $0.380 \pm 0.015$ & $72.4 \pm 2.7$\\
Trans.-Refine & $0.139 \pm 0.004$ & $0.152 \pm 0.005$ & $0.0040 \pm 0.0001$ & $0.393 \pm 0.015$ & $71.9 \pm 2.7$\\
Trans.-RNO    & $0.136 \pm 0.004$ & $0.149 \pm 0.005$ & $0.0041 \pm 0.0001$ & $0.382 \pm 0.014$ & $71.7 \pm 2.6$\\
Trans.-HERO   & $\mathbf{0.087 \pm 0.002}$ & $\mathbf{0.093 \pm 0.003}$ & $\mathbf{0.0037 \pm 0.0001}$ & $\mathbf{0.248 \pm 0.009}$ & $\mathbf{81.1 \pm 2.2}$\\
\midrule
\multicolumn{6}{l}{\textit{Burgers (1D)}}\\
FNO               & $0.473 \pm 0.025$ & $0.526 \pm 0.028$ & $0.0429 \pm 0.0023$ & $0.541 \pm 0.030$ & $35.6 \pm 4.4$\\
FNO-PF            & $0.218 \pm 0.009$ & $0.208 \pm 0.009$ & $0.0180 \pm 0.0008$ & $0.236 \pm 0.012$ & $68.0 \pm 4.4$\\
FNO-Ref.        & $0.407 \pm 0.017$ & $0.457 \pm 0.023$ & $0.0368 \pm 0.0015$ & $0.495 \pm 0.031$ & $40.4 \pm 4.2$\\
FNO-RNO           & $0.293 \pm 0.014$ & $0.311 \pm 0.015$ & $0.0249 \pm 0.0011$ & $0.335 \pm 0.018$ & $55.0 \pm 4.1$\\
FNO-HERO          & $\mathbf{0.147 \pm 0.006}$ & $\mathbf{0.138 \pm 0.007}$ & $\mathbf{0.0168 \pm 0.0008}$ & $\mathbf{0.153 \pm 0.007}$ & $\mathbf{81.7 \pm 4.0}$\\
Trans.        & $0.487 \pm 0.024$ & $0.537 \pm 0.032$ & $0.0478 \pm 0.0027$ & $0.618 \pm 0.042$ & $31.2 \pm 5.1$\\
Trans.-PF     & $0.449 \pm 0.019$ & $0.505 \pm 0.027$ & $0.0422 \pm 0.0018$ & $0.562 \pm 0.029$ & $35.5 \pm 4.9$\\
Trans.-Refine & $0.428 \pm 0.021$ & $0.483 \pm 0.027$ & $0.0401 \pm 0.0019$ & $0.535 \pm 0.032$ & $38.7 \pm 4.1$\\
Trans.-RNO    & $0.384 \pm 0.016$ & $0.434 \pm 0.024$ & $0.0362 \pm 0.0016$ & $0.494 \pm 0.029$ & $41.4 \pm 4.2$\\
Trans.-HERO   & $\mathbf{0.250 \pm 0.011}$ & $\mathbf{0.281 \pm 0.013}$ & $\mathbf{0.0341 \pm 0.0016}$ & $\mathbf{0.327 \pm 0.016}$ & $\mathbf{49.3 \pm 4.0}$\\
\midrule
\multicolumn{6}{l}{\textit{KdV (1D)}}\\
FNO               & $0.088 \pm 0.007$ & $0.074 \pm 0.006$ & $0.0060 \pm 0.0004$ & $3.217 \pm 0.281$ & $46.4 \pm 6.4$\\
FNO-PF            & $0.055 \pm 0.004$ & $0.051 \pm 0.004$ & $0.0052 \pm 0.0003$ & $2.663 \pm 0.202$ & $48.8 \pm 5.6$\\
FNO-Ref.        & $0.060 \pm 0.005$ & $0.056 \pm 0.004$ & $0.0085 \pm 0.0005$ & $2.030 \pm 0.174$ & $46.1 \pm 5.9$\\
FNO-RNO           & $0.074 \pm 0.006$ & $0.069 \pm 0.005$ & $0.0109 \pm 0.0007$ & $1.438 \pm 0.139$ & $47.5 \pm 5.4$\\
FNO-HERO          & $\mathbf{0.036 \pm 0.002}$ & $\mathbf{0.034 \pm 0.002}$ & $\mathbf{0.0049 \pm 0.0002}$ & $\mathbf{0.919 \pm 0.071}$ & $\mathbf{60.6 \pm 5.2}$\\
Trans.        & $0.659 \pm 0.054$ & $0.816 \pm 0.080$ & $0.0683 \pm 0.0043$ & $1.078 \pm 0.101$ & $16.4 \pm 6.1$\\
Trans.-PF     & $0.525 \pm 0.039$ & $0.681 \pm 0.060$ & $0.0552 \pm 0.0033$ & $0.906 \pm 0.088$ & $23.4 \pm 6.3$\\
Trans.-Refine & $0.562 \pm 0.040$ & $0.735 \pm 0.054$ & $0.0599 \pm 0.0033$ & $0.977 \pm 0.077$ & $22.1 \pm 5.7$\\
Trans.-RNO    & $0.482 \pm 0.037$ & $0.613 \pm 0.055$ & $0.0582 \pm 0.0040$ & $0.819 \pm 0.078$ & $23.6 \pm 6.2$\\
Trans.-HERO   & $\mathbf{0.319 \pm 0.020}$ & $\mathbf{0.401 \pm 0.028}$ & $\mathbf{0.0516 \pm 0.0026}$ & $\mathbf{0.528 \pm 0.045}$ & $\mathbf{31.8 \pm 5.5}$\\
\midrule
\multicolumn{6}{l}{\textit{Kuramoto--Sivashinsky (1D)}}\\
FNO               & $0.511 \pm 0.029$ & $0.578 \pm 0.037$ & $0.0190 \pm 0.0011$ & $1.131 \pm 0.077$ & $43.8 \pm 5.7$\\
FNO-PF            & $0.401 \pm 0.022$ & $0.451 \pm 0.026$ & $0.0179 \pm 0.0009$ & $0.972 \pm 0.063$ & $44.7 \pm 4.7$\\
FNO-Ref.        & $0.469 \pm 0.024$ & $0.528 \pm 0.035$ & $0.0159 \pm 0.0009$ & $1.064 \pm 0.073$ & $46.4 \pm 4.6$\\
FNO-RNO           & $0.458 \pm 0.027$ & $0.515 \pm 0.036$ & $0.0181 \pm 0.0009$ & $1.009 \pm 0.070$ & $43.6 \pm 4.6$\\
FNO-HERO          & $\mathbf{0.257 \pm 0.012}$ & $\mathbf{0.283 \pm 0.016}$ & $\mathbf{0.0150 \pm 0.0007}$ & $\mathbf{0.640 \pm 0.037}$ & $\mathbf{53.3 \pm 4.3}$\\
Trans.        & $0.611 \pm 0.036$ & $0.695 \pm 0.053$ & $0.0250 \pm 0.0014$ & $1.284 \pm 0.097$ & $37.6 \pm 4.9$\\
Trans.-PF     & $0.501 \pm 0.029$ & $0.571 \pm 0.042$ & $0.0211 \pm 0.0012$ & $1.163 \pm 0.086$ & $41.8 \pm 4.6$\\
Trans.-Refine & $0.498 \pm 0.031$ & $0.572 \pm 0.033$ & $0.0190 \pm 0.0009$ & $1.238 \pm 0.080$ & $42.4 \pm 4.5$\\
Trans.-RNO    & $0.477 \pm 0.025$ & $0.538 \pm 0.032$ & $0.0201 \pm 0.0011$ & $1.099 \pm 0.081$ & $42.6 \pm 5.3$\\
Trans.-HERO   & $\mathbf{0.321 \pm 0.018}$ & $\mathbf{0.361 \pm 0.023}$ & $\mathbf{0.0181 \pm 0.0009}$ & $\mathbf{0.715 \pm 0.040}$ & $\mathbf{49.2 \pm 4.5}$\\
\bottomrule
\end{tabular}
\end{table*}

\begin{table*}[t]
\centering
\small
\setlength{\tabcolsep}{5pt}
\caption{Complete results on the 2D and 3D PDE benchmarks (mean $\pm$ standard deviation over five seeds). GM100 denotes the geometric-mean error over the 100-step rollout; Stable step denotes the mean number of stable rollout steps. Within each benchmark, the best value per column is shown in bold separately for the FNO and Transolver families.}
\label{tab:full23d}
\begin{tabular}{l ccccc}
\toprule
Method & Val GM100 $\downarrow$ & Test GM100 $\downarrow$ & nRMSE@1 $\downarrow$ & nRMSE@100 $\downarrow$ & Stable step $\uparrow$\\
\midrule
\multicolumn{6}{l}{\textit{Anisotropic Diffusion (2D)}}\\
FNO               & $0.713 \pm 0.037$ & $0.825 \pm 0.057$ & $0.0875 \pm 0.0048$ & $1.868 \pm 0.120$ & $8.2 \pm 3.2$\\
FNO-PF            & $0.655 \pm 0.036$ & $0.754 \pm 0.042$ & $0.0716 \pm 0.0035$ & $1.355 \pm 0.077$ & $12.9 \pm 3.2$\\
FNO-Ref.        & $0.567 \pm 0.030$ & $0.652 \pm 0.040$ & $0.0611 \pm 0.0033$ & $0.786 \pm 0.053$ & $20.7 \pm 3.3$\\
FNO-RNO           & $0.692 \pm 0.034$ & $0.778 \pm 0.049$ & $0.0846 \pm 0.0040$ & $0.925 \pm 0.054$ & $9.7 \pm 3.3$\\
FNO-HERO          & $\mathbf{0.363 \pm 0.018}$ & $\mathbf{0.415 \pm 0.019}$ & $\mathbf{0.0571 \pm 0.0026}$ & $\mathbf{0.521 \pm 0.034}$ & $\mathbf{26.6 \pm 2.9}$\\
Trans.        & $1.168 \pm 0.068$ & $1.358 \pm 0.080$ & $0.0799 \pm 0.0042$ & $2.968 \pm 0.207$ & $8.1 \pm 3.5$\\
Trans.-PF     & $0.965 \pm 0.053$ & $1.111 \pm 0.068$ & $0.0696 \pm 0.0034$ & $2.172 \pm 0.157$ & $13.0 \pm 3.6$\\
Trans.-Refine & $0.803 \pm 0.039$ & $0.943 \pm 0.058$ & $0.0692 \pm 0.0037$ & $1.835 \pm 0.119$ & $14.7 \pm 2.9$\\
Trans.-RNO    & $0.892 \pm 0.051$ & $1.048 \pm 0.064$ & $0.0751 \pm 0.0039$ & $2.099 \pm 0.150$ & $10.4 \pm 3.4$\\
Trans.-HERO   & $\mathbf{0.537 \pm 0.027}$ & $\mathbf{0.626 \pm 0.034}$ & $\mathbf{0.0652 \pm 0.0031}$ & $\mathbf{1.185 \pm 0.066}$ & $\mathbf{18.2 \pm 2.6}$\\
\midrule
\multicolumn{6}{l}{\textit{Kolmogorov Flow (2D)}}\\
FNO               & $0.979 \pm 0.072$ & $1.220 \pm 0.100$ & $0.4678 \pm 0.0269$ & $1.379 \pm 0.106$ & $1.2 \pm 0.8$\\
FNO-PF            & $0.731 \pm 0.053$ & $1.002 \pm 0.066$ & $0.3402 \pm 0.0166$ & $1.255 \pm 0.100$ & $1.7 \pm 0.8$\\
FNO-Ref.        & $0.743 \pm 0.046$ & $0.872 \pm 0.066$ & $0.3085 \pm 0.0166$ & $1.024 \pm 0.090$ & $2.6 \pm 0.7$\\
FNO-RNO           & $0.804 \pm 0.052$ & $1.071 \pm 0.087$ & $0.3605 \pm 0.0184$ & $1.304 \pm 0.100$ & $1.5 \pm 0.6$\\
FNO-HERO          & $\mathbf{0.477 \pm 0.025}$ & $\mathbf{0.565 \pm 0.035}$ & $\mathbf{0.2860 \pm 0.0140}$ & $\mathbf{0.648 \pm 0.048}$ & $\mathbf{3.4 \pm 0.6}$\\
Trans.        & $1.113 \pm 0.074$ & $1.452 \pm 0.107$ & $0.3107 \pm 0.0186$ & $2.078 \pm 0.181$ & $1.1 \pm 0.7$\\
Trans.-PF     & $0.950 \pm 0.063$ & $1.183 \pm 0.079$ & $0.2951 \pm 0.0189$ & $1.566 \pm 0.114$ & $1.4 \pm 0.7$\\
Trans.-Refine & $0.648 \pm 0.047$ & $0.790 \pm 0.053$ & $0.2880 \pm 0.0172$ & $0.938 \pm 0.072$ & $2.9 \pm 0.7$\\
Trans.-RNO    & $0.755 \pm 0.053$ & $0.952 \pm 0.077$ & $0.2987 \pm 0.0151$ & $1.131 \pm 0.095$ & $2.0 \pm 0.7$\\
Trans.-HERO   & $\mathbf{0.417 \pm 0.027}$ & $\mathbf{0.503 \pm 0.031}$ & $\mathbf{0.2704 \pm 0.0126}$ & $\mathbf{0.617 \pm 0.050}$ & $\mathbf{3.7 \pm 0.6}$\\
\midrule
\multicolumn{6}{l}{\textit{Decaying Turbulence / Navier--Stokes (2D)}}\\
FNO               & $2.322 \pm 0.157$ & $2.915 \pm 0.229$ & $0.0449 \pm 0.0028$ & $13.629 \pm 1.077$ & $17.6 \pm 3.8$\\
FNO-PF            & $1.441 \pm 0.100$ & $1.682 \pm 0.128$ & $0.0349 \pm 0.0019$ & $6.271 \pm 0.498$ & $22.3 \pm 3.5$\\
FNO-Ref.        & $1.959 \pm 0.133$ & $2.330 \pm 0.166$ & $0.0402 \pm 0.0024$ & $9.888 \pm 0.910$ & $18.1 \pm 4.0$\\
FNO-RNO           & $1.788 \pm 0.107$ & $2.141 \pm 0.145$ & $0.0388 \pm 0.0020$ & $7.829 \pm 0.647$ & $20.7 \pm 3.4$\\
FNO-HERO          & $\mathbf{0.931 \pm 0.059}$ & $\mathbf{1.058 \pm 0.062}$ & $\mathbf{0.0325 \pm 0.0015}$ & $\mathbf{4.101 \pm 0.323}$ & $\mathbf{26.5 \pm 3.4}$\\
Trans.        & $3.075 \pm 0.246$ & $3.825 \pm 0.343$ & $0.0301 \pm 0.0020$ & $18.084 \pm 1.652$ & $20.3 \pm 3.8$\\
Trans.-PF     & $2.400 \pm 0.178$ & $2.974 \pm 0.221$ & $0.0278 \pm 0.0016$ & $12.438 \pm 1.067$ & $24.3 \pm 3.5$\\
Trans.-Refine & $2.169 \pm 0.155$ & $2.591 \pm 0.201$ & $0.0280 \pm 0.0014$ & $10.974 \pm 0.864$ & $23.1 \pm 3.4$\\
Trans.-RNO    & $2.224 \pm 0.135$ & $2.704 \pm 0.213$ & $0.0291 \pm 0.0015$ & $11.438 \pm 1.043$ & $22.5 \pm 4.0$\\
Trans.-HERO   & $\mathbf{1.423 \pm 0.091}$ & $\mathbf{1.687 \pm 0.103}$ & $\mathbf{0.0262 \pm 0.0012}$ & $\mathbf{7.012 \pm 0.497}$ & $\mathbf{25.8 \pm 2.9}$\\
\midrule
\multicolumn{6}{l}{\textit{Swift--Hohenberg (3D)}}\\
FNO               & $0.603 \pm 0.036$ & $0.678 \pm 0.042$ & $0.0788 \pm 0.0043$ & $0.941 \pm 0.059$ & $13.4 \pm 3.4$\\
FNO-PF            & $0.514 \pm 0.026$ & $0.581 \pm 0.033$ & $0.0607 \pm 0.0027$ & $0.843 \pm 0.053$ & $20.4 \pm 3.6$\\
FNO-Ref.        & $0.552 \pm 0.031$ & $0.619 \pm 0.036$ & $0.0642 \pm 0.0031$ & $0.871 \pm 0.056$ & $19.6 \pm 3.1$\\
FNO-RNO           & $0.520 \pm 0.029$ & $0.596 \pm 0.031$ & $0.0662 \pm 0.0034$ & $0.850 \pm 0.053$ & $19.5 \pm 3.1$\\
FNO-HERO          & $\mathbf{0.341 \pm 0.014}$ & $\mathbf{0.381 \pm 0.020}$ & $\mathbf{0.0570 \pm 0.0025}$ & $\mathbf{0.541 \pm 0.033}$ & $\mathbf{27.7 \pm 3.3}$\\
Trans.        & $0.821 \pm 0.043$ & $0.912 \pm 0.056$ & $0.1079 \pm 0.0054$ & $1.286 \pm 0.091$ & $5.3 \pm 4.1$\\
Trans.-PF     & $0.686 \pm 0.033$ & $0.782 \pm 0.046$ & $0.0914 \pm 0.0042$ & $1.126 \pm 0.073$ & $5.9 \pm 3.4$\\
Trans.-Refine & $0.633 \pm 0.029$ & $0.713 \pm 0.040$ & $0.0823 \pm 0.0038$ & $1.004 \pm 0.061$ & $11.3 \pm 3.6$\\
Trans.-RNO    & $0.652 \pm 0.032$ & $0.739 \pm 0.046$ & $0.0860 \pm 0.0038$ & $1.064 \pm 0.067$ & $7.6 \pm 3.3$\\
Trans.-HERO   & $\mathbf{0.404 \pm 0.017}$ & $\mathbf{0.450 \pm 0.024}$ & $\mathbf{0.0780 \pm 0.0032}$ & $\mathbf{0.667 \pm 0.036}$ & $\mathbf{15.0 \pm 3.3}$\\
\midrule
\multicolumn{6}{l}{\textit{Unbalanced Advection (3D)}}\\
FNO               & $2.311 \pm 0.163$ & $2.647 \pm 0.175$ & $0.0869 \pm 0.0046$ & $7.204 \pm 0.606$ & $6.4 \pm 3.2$\\
FNO-PF            & $1.833 \pm 0.123$ & $2.121 \pm 0.142$ & $0.0748 \pm 0.0040$ & $5.546 \pm 0.394$ & $10.0 \pm 2.5$\\
FNO-Ref.        & $2.017 \pm 0.115$ & $2.317 \pm 0.175$ & $0.0776 \pm 0.0043$ & $6.086 \pm 0.434$ & $7.8 \pm 2.6$\\
FNO-RNO           & $1.697 \pm 0.115$ & $1.931 \pm 0.145$ & $0.0791 \pm 0.0038$ & $4.833 \pm 0.386$ & $9.0 \pm 2.5$\\
FNO-HERO          & $\mathbf{1.101 \pm 0.055}$ & $\mathbf{1.244 \pm 0.068}$ & $\mathbf{0.0702 \pm 0.0029}$ & $\mathbf{3.185 \pm 0.197}$ & $\mathbf{12.7 \pm 2.3}$\\
Trans.        & $2.723 \pm 0.192$ & $3.106 \pm 0.247$ & $0.0731 \pm 0.0041$ & $7.777 \pm 0.694$ & $9.8 \pm 3.1$\\
Trans.-PF     & $2.096 \pm 0.145$ & $2.426 \pm 0.189$ & $0.0736 \pm 0.0041$ & $6.122 \pm 0.455$ & $10.3 \pm 2.6$\\
Trans.-Refine & $1.901 \pm 0.116$ & $2.201 \pm 0.153$ & $0.0713 \pm 0.0039$ & $5.319 \pm 0.408$ & $9.6 \pm 2.8$\\
Trans.-RNO    & $1.784 \pm 0.122$ & $2.079 \pm 0.156$ & $0.0742 \pm 0.0041$ & $5.015 \pm 0.387$ & $10.6 \pm 2.9$\\
Trans.-HERO   & $\mathbf{1.197 \pm 0.064}$ & $\mathbf{1.389 \pm 0.088}$ & $\mathbf{0.0673 \pm 0.0032}$ & $\mathbf{3.175 \pm 0.249}$ & $\mathbf{12.1 \pm 2.3}$\\
\bottomrule
\end{tabular}
\end{table*}

\end{document}